\documentclass{article} 
\usepackage{iclr2027_conference,times}

\usepackage{amsmath,amsfonts,bm}

\def\eqref#1{equation~\ref{#1}}

\def\1{\bm{1}}

\DeclareMathAlphabet{\mathsfit}{\encodingdefault}{\sfdefault}{m}{sl}
\SetMathAlphabet{\mathsfit}{bold}{\encodingdefault}{\sfdefault}{bx}{n}

\usepackage{hyperref}
\usepackage{url}
\usepackage{graphicx}
\usepackage{multirow}
\usepackage{subcaption}
\usepackage{booktabs}
\usepackage{wrapfig}
\usepackage{amsmath}
\usepackage{amssymb}
\usepackage{amsfonts}
\usepackage{xcolor}
\usepackage{colortbl}
\usepackage{tikz}    
\usepackage{cleveref}
\usepackage{enumitem}
\usepackage{setspace}
\usepackage{titlesec}
\usepackage{algorithm}
\usepackage{algorithmic}
\usepackage{amsthm}
\usepackage{microtype}
\newtheorem{assumption}{Assumption}
\newtheorem{proposition}{Proposition}
\usepackage{xspace}
\newcommand{\bpm}{\textsc{bpm}\xspace}
\usepackage{tablefootnote}

\title{The Missing Coefficients: Bayesian Pairwise Merging for Model Personalization}

\author{Yaling Shen, \,Tongtong Wu, \,Siyuan Yan, \,Gholamreza Haffari \\
Monash University\\
\texttt{\{yaling.shen, gholamreza.haffari\}@monash.edu} \\
}

\iclrfinalcopy 
\begin{document}

\maketitle
\fancyhead{} 
\lhead{Preprint}

\begin{abstract}
How can we personalize a shared expert library from a user's pairwise choices? 
Prior work can realize different reward trade-offs by merging reward-specialized experts, given a vector of trade-off weights. 
In practice, users can more naturally choose between outputs than specify numerical weights. 
The challenge is therefore to turn these choices into the coefficients required for merging, while accounting for ambiguity when feedback is limited. 
Our key idea is to treat the unknown reward weights as latent variables: infer a posterior over them from pairwise choices and reward-score differences, and use its mean directly as the merge coefficients. 
We instantiate this idea as Bayesian Pairwise Merging (\bpm), whose posterior also characterizes which reward trade-offs remain plausible given the feedback.
We evaluate \bpm on radiology summarization, image captioning, and story generation, spanning text-to-text and image-to-text generation.
With 100 feedback per simulated persona, \bpm achieves macro decided win rates of 91.7\%, 77.1\%, and 64.3\% against uniform merge.
For six pairs of simulated personas, each prefers the model fitted to its own feedback, a pattern also observed in a human proof-of-concept.
In simulations under \bpm's model and prior, its nominal $90\%$ intervals for temperature-scaled reward weights achieve task-averaged marginal coverage of $88.9\%$ and $89.2\%$ with only $10$ and $25$ comparisons, respectively.
\bpm thus enables personalization from pairwise feedback without per-user policy training, while characterizing the coefficient ambiguity left by limited feedback.
\end{abstract}

\section{Introduction}\label{sec:intro}

Personalizing generative models requires adapting to how users trade off competing objectives~\citep{guan2025survey,ji2026survey}. 
Existing multi-objective methods can realize different trade-offs once a preference vector is given~\citep{zhong2024panacea, lin2025parm}. 
A representative example is Rewarded Soups~\citep{rame2023rewardedsoups}, which merges reward-specialized experts with different coefficients to construct models for different preferences. 
The same expert library can then be reused across users, avoiding a separate training run for every trade-off. 
These methods assume that the preference weights are already known.
Users, however, may find it easier to compare generated outputs than to specify numerical weights over reward dimensions. 
The central question is \emph{how a user's pairwise choices can supply the missing coefficients for model personalization}.


Answering this question requires connecting preference inference with model construction.
Prior work has shown that pairwise choices can inform estimates of user-specific reward weights~\citep{bose2025lore}, but limited feedback may leave multiple weight vectors plausible~\citep{shenfeld2025pref}.
Yet even accurate reward-weight estimates do not guarantee the intended trade-offs in the merged model~\citep[\S2.2.3]{rame2023rewardedsoups}.
Thus, translating uncertain preference estimates into effective merge coefficients remains a challenge.


\begin{figure}[t]
\begin{center}
\includegraphics[width=\textwidth]{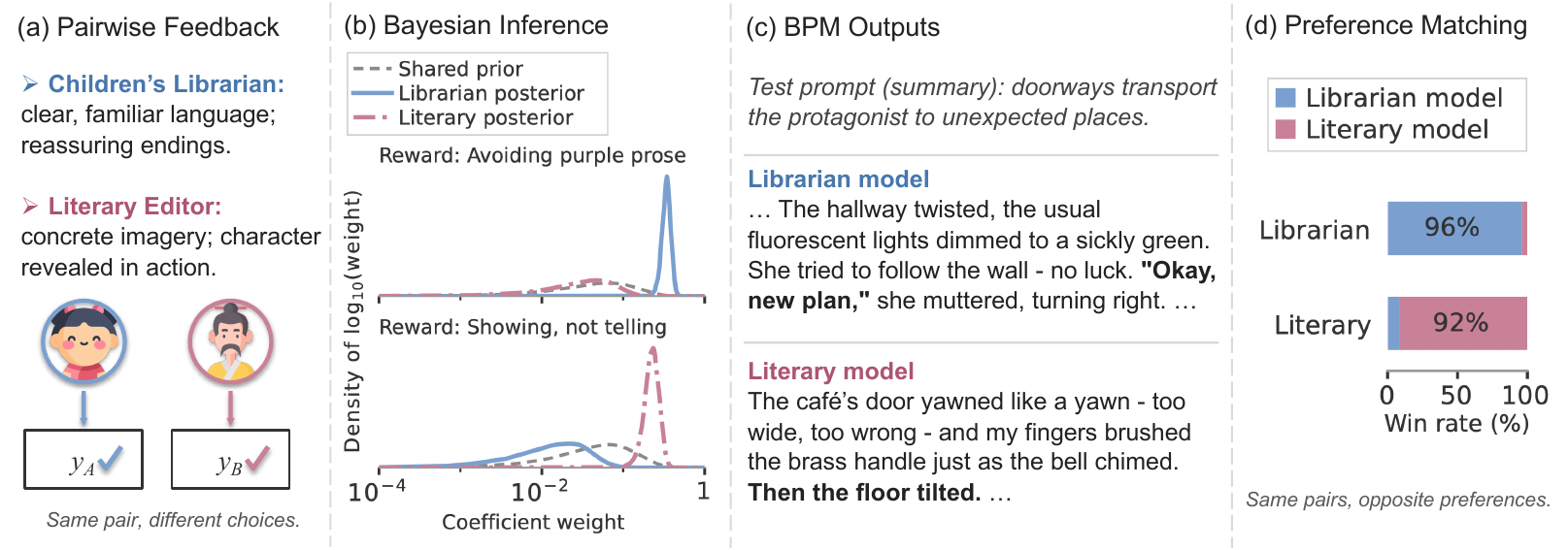}
\end{center}
\vspace{-1.5em}
\caption{\textbf{Pairwise feedback personalizes a merged model.}
Two simulated personas provide different choices~\textit{(a)}, inducing distinct posteriors~\textit{(b)} and merged-model outputs~\textit{(c)}.
Each persona favors its corresponding model on shared response pairs~\textit{(d)}.}
\vspace{-1.8em}
\label{fig:teaser}
\end{figure}

In this work, we propose \textbf{Bayesian Pairwise Merging (\bpm)}, which connects uncertain preference inference with the construction of a personalized model (\Cref{fig:teaser}).
To account for the ambiguity left by limited feedback, \bpm uses Bayesian inference to obtain a posterior over user-specific reward weights, quantifying the relative support for different trade-offs and the uncertainty that remains.
The posterior mean then supplies the coefficients for merging shared reward-specialized experts, enabling classical Bayesian inference to guide the personalization of modern deep generative models without per-user policy training.
Our theoretical analysis makes explicit the additional assumptions needed to connect accurate weight estimation with optimality.
Experiments on radiology summarization~\citep{zhang2018radsum}, image captioning~\citep{chen2015cococaptions}, and story generation~\citep{fan2018hierarchical} demonstrate effective personalization and characterize the stability and uncertainty of the inferred coefficients.

Our main contributions are threefold.
\begin{enumerate}[leftmargin=*,itemsep=0pt,topsep=0pt,partopsep=0pt]
\item \textbf{Pairwise feedback supplies effective model-construction coefficients.}
\bpm connects Bayesian reward-weight inference to shared-expert merging, constructing personalized models without explicit user-specified weights or per-user policy training.
The inferred coefficients improve on uniform merging across three
generation tasks, establishing a practical route from pairwise
choices to personalized models
(\S\ref{sec:method} and~\S\ref{sec:results-main}).

\item \textbf{Inferred coefficients translate preference differences into model behavior.}
Across three generation tasks, simulated personas favor models fitted to their own feedback, and controlled changes in feedback preferences produce corresponding behavioral shifts.
Together, these findings demonstrate preference following beyond aggregate improvements in model performance (\S\ref{sec:results-follow}).

\item \textbf{\bpm stabilizes coefficients and quantifies uncertainty.}
\bpm's posterior means vary less than \textsc{mle} coefficients across feedback subsets at $n\leq100$ in all three tasks.
In simulations under \bpm's model and prior, compact intervals for temperature-scaled reward weights achieve near-nominal coverage.
\bpm thus provides reduced coefficient sensitivity and quantified reward-weight ambiguity, complementing output preference as distinct dimensions of personalization (\S\ref{sec:analysis}, Appendix~\ref{app:analysis}).
\end{enumerate}
\section{Related Work}\label{sec:related_work}
\paragraph{Multi-objective alignment with specified preferences.}
Given a preference vector, existing methods implement the corresponding trade-offs through weighted-reward optimization~\citep{wu2023finegrained, zhou2024modpo}, preference-conditioned policies~\citep{yang2024ric, zhong2024panacea, wang2024clp, wang2024dpa}, decoding-time composition~\citep{shi2024mod}, or preference-aware model merging~\citep{rame2023rewardedsoups, xie2025bone, chen2025pareto, li2025map}.
Related work on multi-teacher on-policy distillation integrates specialized teachers into a student through additional training~\citep{ma2026mopd}.
\bpm addresses how to obtain the preference weights themselves from an individual's pairwise choices, then uses them to merge shared experts without per-user policy training.

\paragraph{Preference inference from pairwise feedback.}
Prior work learns reward functions from human comparisons~\citep{christiano2017deep} and uses Bayesian inference to estimate reward weights as well as their uncertainty~\citep{sadigh2018active, biyik2020arl}.
PReF~\citep{shenfeld2025pref} and LoRe~\citep{bose2025lore} infer user-specific weights over reward components learned across users, while other methods model preference distributions~\citep{poddar2024vpl, lee2026vrf}.
\bpm uses predefined reward dimensions without requiring other users' preference data.
In connecting feedback to personalized outputs, UserAlign selects responses from a fixed candidate pool~\citep{padurean2025useralign}, while Spectral Souping estimates policy-combination coefficients from online feedback, including through sequential Gaussian posterior updates~\citep{chow2026spectral}.
\bpm infers a posterior over reward weights from a fixed offline set of pairwise judgments.

\section{Bayesian Pairwise Merging}\label{sec:method}

\begin{figure}[t]
\begin{center}
\includegraphics[width=\textwidth]{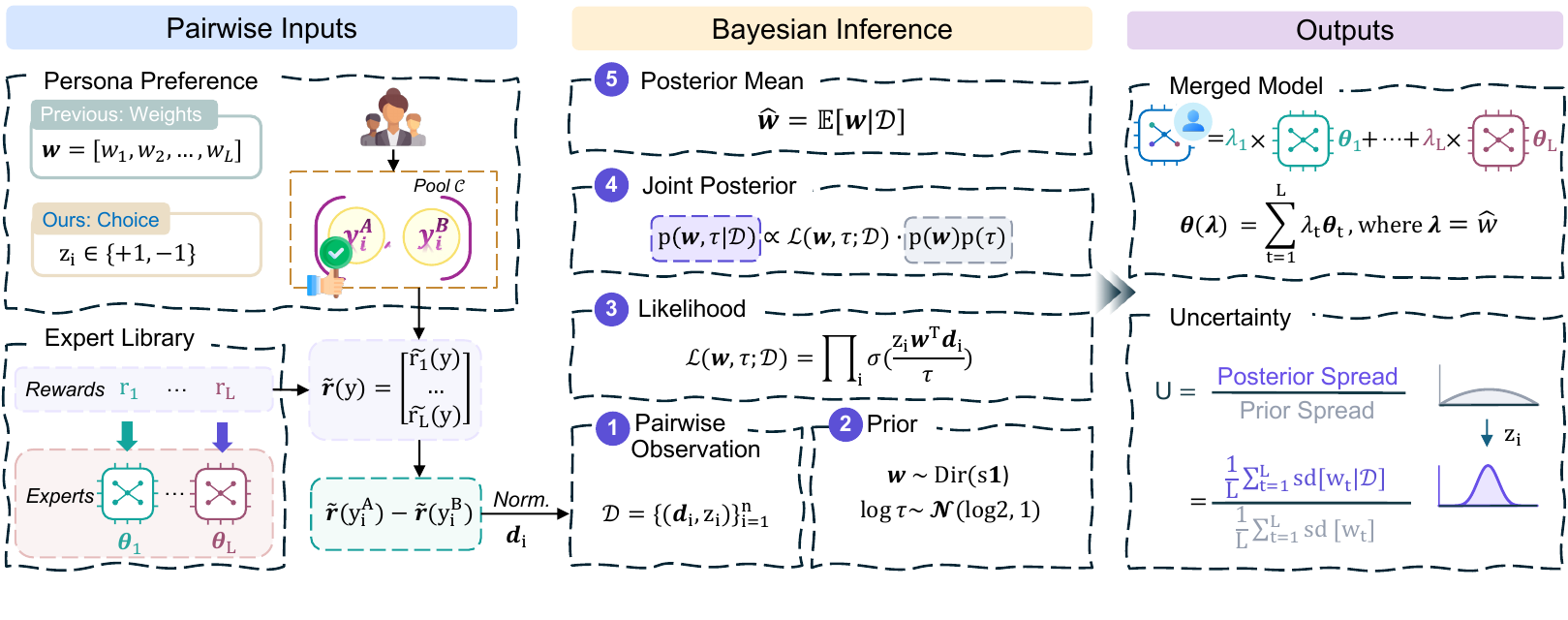}
\end{center}
\vspace{-1.8em}
\caption{\textbf{Bayesian Pairwise Merging.} A persona's pairwise feedback updates a shared prior over reward weights into a posterior, whose mean supplies the merge coefficients and whose spread quantifies their uncertainty. Norm.\ denotes normalization by the infinity norm.}
\vspace{-1.5em}
\label{fig:method}
\end{figure}

Bayesian pairwise merging (BPM) personalizes a model for each \textit{persona}, which represents an individual or a group of users with shared preferences.
From pairwise feedback, \bpm constructs a personalized model and characterizes uncertainty in the inferred reward weights.
\Cref{fig:method} and~\Cref{alg:bpm} summarize the procedure, with key notations listed in~\Cref{tab:app-notation}.

\subsection{Problem Formulation}\label{sec:method-problem}
\paragraph{Expert library and merge family.}
For an input $x$, a model with parameters $\boldsymbol{\theta}$ generates an output $y$, evaluated by $L$ reward functions $r_1(x,y),\dots,r_L(x,y)$, with higher scores indicating better performance on each reward dimension.
The experts $\{\boldsymbol{\theta}_t\}_{t=1}^{L}$, each fine-tuned on reward $r_t$ from a shared base model $\boldsymbol{\theta}_0$, form the shared expert library.
We construct personalized models by combining these expert parameters with the merge coefficients $\boldsymbol{\lambda}$, defining the merge family
\begin{equation}\label{eq:family}
\boldsymbol{\theta}(\boldsymbol{\lambda})
=\sum_{t=1}^{L}\lambda_t\,\boldsymbol{\theta}_t,
\qquad
\boldsymbol{\lambda}\in\Delta^{L-1}
=\Big\{\boldsymbol{\lambda}\in\mathbb R^L:
\lambda_t\ge0,\ \textstyle\sum_{t=1}^{L}\lambda_t=1\Big\},
\end{equation}
where $\Delta^{L-1}$ is the probability simplex.
For our LoRA experts~\citep{hu2022lora}, $\boldsymbol{\theta}_t$ collects the adapter parameters on the shared base model, so \Cref{eq:family} interpolates the adapters (Appendix~\ref{app:merge}).
To personalize a model within this family, \bpm infers a persona's reward weights $\mathbf w\in\Delta^{L-1}$ from pairwise feedback and uses their posterior mean as merge coefficients $\boldsymbol{\lambda}$, following the direct-weighting strategy of Rewarded Soups~\citep[\S2.2.3]{rame2023rewardedsoups}.

\paragraph{Pairwise feedback.}
A persona provides $n$ pairwise comparisons $\{(x_i,y_i^A,y_i^B,z_i)\}_{i=1}^n$, where $y_i^A$ and $y_i^B$ are candidate outputs for the same input $x_i$, and $z_i=+1$ if the persona prefers $y_i^A$ while $z_i=-1$ for $y_i^B$.
\bpm observes only these choices and the reward scores of the compared outputs, without access to the persona description or explicit reward weights.

Because rewards differ in scale~\citep{liu2026gdpo}, we standardize each reward as $\tilde r_t(x,y)=(r_t(x,y)-m_t)/\nu_t$, where the mean $m_t$ and standard deviation $\nu_t$ are estimated on a separate data split.
Writing the standardized reward vector as $\tilde{\mathbf r}(x,y) =[\tilde r_1(x,y),\dots,\tilde r_L(x,y)]^\top$, we represent each comparison by its normalized reward contrast
\begin{equation}\label{eq:contrast}
\mathbf{d}_i=
\frac{\tilde{\mathbf{r}}(x_i,y_i^A)-\tilde{\mathbf{r}}(x_i,y_i^B)}
{\|\tilde{\mathbf{r}}(x_i,y_i^A)-\tilde{\mathbf{r}}(x_i,y_i^B)\|_\infty}
\in[-1,1]^L,
\end{equation}
with $\mathbf d_i=\mathbf 0$ when the two reward vectors are identical.
This normalization preserves the direction of nonzero reward differences while discarding their magnitude, and ensures $|\mathbf w^\top\mathbf d_i|\le1$ for $\mathbf w\in\Delta^{L-1}$.
The resulting observations $\mathcal D=\{(\mathbf d_i,z_i)\}_{i=1}^{n}$ then serve as the input to Bayesian inference.

\subsection{Bayesian Reward-Weight Inference}\label{sec:method-model}
We model pairwise feedback using reward weights $\mathbf w\in\Delta^{L-1}$ and a temperature $\tau>0$ that controls how strongly reward differences influence the probability of preferring one output over another.
A Bradley--Terry likelihood updates the shared prior to obtain a persona-specific posterior.

\paragraph{Bradley--Terry likelihood.}
Following common practice in pairwise preference modeling~\citep{bradley1952rank, shenfeld2025pref}, we model the persona's preference for $y_i^A$ over $y_i^B$ as a Bernoulli outcome with log-odds $\mathbf w^\top\mathbf d_i/\tau$.
The parameter-recovery interpretation below additionally requires correct specification, as formalized in the following assumption.

\begin{assumption}[Bradley--Terry preference model]\label{ass:bt}
There exist reward weights $\mathbf w^\star\in\Delta^{L-1}$ and a temperature $\tau^\star>0$ such that, conditional on the reward contrasts, the preference labels are independent, with
\[
\Pr(z_i=+1\mid\mathbf d_i)
=\sigma\!\left(
\frac{\mathbf w^{\star\top}\mathbf d_i}{\tau^\star}
\right),
\]
where $\sigma(a)=1/(1+\exp(-a))$ is the logistic sigmoid.
\end{assumption}

Under this model, the likelihood of the observed comparisons is
\begin{equation}\label{eq:likelihood}
\mathcal L(\mathbf w,\tau;\mathcal D)
=\prod_{i=1}^{n}
\sigma\!\left(\frac{z_i\,\mathbf w^\top\mathbf d_i}{\tau}\right).
\end{equation}
Because $|\mathbf w^\top\mathbf d_i|\le1$, the log-odds would be restricted to $[-1,1]$ without temperature scaling.
Smaller values of $\tau$ yield sharper choice probabilities, whereas larger values move them toward $1/2$.
The simplex constraint fixes the scale of $\mathbf w$, removing its scaling ambiguity with $\tau$ (Appendix~\ref{app:temperature-identification}).

\paragraph{Prior and posterior.}
All personas within a task share the independent priors $\mathbf w\sim\operatorname{Dir}(s\mathbf 1)$ and $\log\tau\sim\mathcal N(\log 2,1)$, where $\mathbf 1$ is the all-ones vector and $s>0$ controls the prior dispersion.
The symmetric Dirichlet prior assigns equal expected weight to each reward dimension, with $\mathbb E[\mathbf w]=\mathbf 1/L$.
We fix $s$ per task by a variance-matching rule, before any feedback is observed (Appendix~\ref{app:prior}).
Bayes' rule yields the joint posterior
\begin{equation}\label{eq:posterior}
p(\mathbf w,\tau\mid\mathcal D)
\propto
\mathcal L(\mathbf w,\tau;\mathcal D)\,p(\mathbf w)\,p(\tau).
\end{equation}
We sample from this posterior using the No-U-Turn Sampler (NUTS)~\citep{hoffman14nuts} and marginalize over $\tau$ to obtain $p(\mathbf w\mid\mathcal D)$.
Under the conditions of Proposition~\ref{prop:target}, the joint posterior concentrates on the minimizer of expected log-loss, which is $(\mathbf w^\star,\tau^\star)$ when Assumption~\ref{ass:bt} holds.

\subsection{Posterior-Guided Model Merging}\label{sec:method-delivery}
\paragraph{Merged model.}
We use the posterior mean of the reward weights as merge coefficients to construct the personalized model,
\begin{equation}\label{eq:delivery}
\widehat{\mathbf w}=\mathbb E[\mathbf w\mid\mathcal D],
\qquad
\boldsymbol{\lambda}=\widehat{\mathbf w},
\qquad
\boldsymbol{\theta}(\boldsymbol{\lambda})
=\sum_{t=1}^{L}\lambda_t\,\boldsymbol{\theta}_t.
\end{equation}
The posterior mean lies in $\Delta^{L-1}$ and minimizes the posterior expected squared error in estimating $\mathbf w$ (Appendix~\ref{app:posterior-summaries}).
Without feedback, $\widehat{\mathbf w}=\mathbf 1/L$, and \bpm returns the uniform merge.
Personalization therefore requires only posterior inference and one merge, without per-persona policy training.

The posterior mean minimizes squared error in reward-weight estimation, while optimality of the resulting merge additionally requires the following correspondence assumption.
Let $\pi_{\boldsymbol{\theta}}(y\mid x)$ denote the output distribution of a model with parameters $\boldsymbol{\theta}$.

\begin{assumption}[Preference--merge correspondence]\label{ass:coord}
For $\mathbf w^\star$ in Assumption~\ref{ass:bt},
\[
\mathbf w^\star\in
\operatorname*{arg\,max}_{\boldsymbol{\lambda}\in\Delta^{L-1}}
\mathbb E_x
\mathbb E_{y\sim\pi_{\boldsymbol{\theta}(\boldsymbol{\lambda})}
(\cdot\mid x)}
\left[\mathbf w^{\star\top}\tilde{\mathbf r}(x,y)\right],
\]
where $x$ follows the task's input distribution.
\end{assumption}

Under Assumption~\ref{ass:bt} and the conditions of Proposition~\ref{prop:target}, $\widehat{\mathbf w}$ converges to $\mathbf w^\star$ as $n$ grows.
With Assumption~\ref{ass:coord}, the limiting model $\boldsymbol\theta(\mathbf w^\star)$ maximizes the persona's expected weighted reward within the merge family (Appendix~\ref{app:correspondence}).
The correspondence assumption supports this optimality interpretation but is not required to construct the personalized model.

\paragraph{Posterior uncertainty.}
Pairwise feedback constrains reward weights through the observed reward contrasts, with no direct likelihood information in directions orthogonal to their span
(Proposition~\ref{prop:span}, Appendix~\ref{app:identifiability}).
The number of comparisons alone therefore does not determine posterior uncertainty, which we summarize by the ratio of average posterior to prior standard deviations,
\begin{equation}\label{eq:width}
U=
\frac{\tfrac1L\sum_{t=1}^{L}\operatorname{sd}[w_t\mid\mathcal D]}
{\tfrac1L\sum_{t=1}^{L}\operatorname{sd}[w_t]}.
\end{equation}
With $U=1$ as the no-feedback baseline, smaller values indicate lower dispersion under the fitted preference model, whose relationship to personalization performance we examine in~\Cref{sec:analysis}.



\section{Experiment Setup}\label{sec:setup}

\subsection{Tasks and Model Libraries}\label{sec:setup-tasks}
\paragraph{Tasks and datasets.}
We evaluate \bpm on three tasks spanning text-only and vision-language settings. 
For \textsc{Summarization}, models generate a short impression from the comprehensive radiology report findings in the MIMIC-CXR dataset~\citep{PhysioNet-mimic-cxr}.
For \textsc{Image Captioning}, models use free-form text to describe images from COCO~\citep{lin2014coco}. 
For \textsc{Story Generation}, models write short stories in response to instructions from Writing Prompts~\citep{fan2018hierarchical}.

\paragraph{Expert libraries.}
For each task, we use standard evaluation metrics as reward dimensions, scored by computed metrics or an LLM scorer (gpt-oss-20b) separate from the persona judges, and train one LoRA expert~\citep{hu2022lora} per dimension from a shared base model using GRPO~\citep{deepseek-math}.
The libraries comprise $29$ reward-specialized experts based on Qwen3.5-2B~\citep{qwen35} for radiology report summarization, $21$ on Qwen3-VL-4B~\citep{bai2025qwen3vl} for image captioning, and $16$ on Ministral-3-3B~\citep{liu2026ministral3} for story generation.
To assess \bpm's robustness to variations in training, we construct additional \textsc{Summarization} libraries using GRPO with two more seeds, RLOO~\citep{ahmadian2024rloo}, or DPO~\citep{rafailov2023direct}.
All libraries are shared across personas, with persona-specific feedback determining the merge coefficients.
Reward definitions and training details are provided in Appendices~\ref{app:rewards} and~\ref{app:training}, respectively.

\subsection{Personalization and Evaluation Protocol}\label{sec:setup-protocol}
\paragraph{Persona simulation.}
We simulate personas through either natural-language descriptions or explicit weights over rubric criteria.
For \textit{prompt-based personas}, we use Qwen3.5-27B~\citep{qwen35} to simulate six personas for \textsc{Summarization} and seven for \textsc{Story Generation}, while for \textsc{Image Captioning}, six personas are simulated with Qwen3-VL-32B~\citep{qwen3technicalreport}.
Both models are conditioned on system prompts describing diverse preferences, including those not explicitly represented by the reward library (Appendix~\ref{app:personas}).
For \textit{rubric-based personas}, we construct five for \textsc{Story Generation} using AutoRubric~\citep{rao2026autorubric} for weighted rubric evaluation, with adjustable criterion weights specifying preferences.
These weights remain hidden from \bpm and can be varied to study how personalized models follow changes in preference (Appendix~\ref{app:forma}).

\paragraph{Preference assignment.}
\textit{Prompt-based personas} use the same LLM and persona description
to provide pairwise feedback and evaluate model outputs on
disjoint test inputs.
Each response pair is judged in both presentation orders;
feedback collection retains only valid, non-tied choices
that agree across orders.
\textit{Rubric-based personas} assign preferences according to
persona-weighted rubric scores, with tied or invalid comparisons
excluded from feedback.
The feedback budget $n$ counts retained comparisons rather than
individual judge queries.

\paragraph{Pool $\mathcal{C}$ construction and feedback simulation.}\label{sec:setup-pool}
For each task, the experts, uniform merge, and $20$ random merges generate responses on $250$ inputs disjoint from reward standardization and testing.
We sample four pairs per input and obtain persona judgments using the protocol above.
Three feedback repetitions are sampled independently and nested across budgets within each repetition, keeping the expert library fixed (Appendices~\ref{app:data}, \ref{app:pool-construction}, and~\ref{app:seeds})

\paragraph{Compared methods.}
We compare \bpm with methods that personalize through expert weighting (\textsc{rm} and \textsc{l2w}), response selection (\textsc{bt-bon}), prompting (\textsc{icai}), or policy training (\textsc{dpo}), alongside the \textsf{base} policy and uniform merge (\textsc{rs}).
For each persona and budget, methods that use pairwise feedback receive the same sampled subsets as \bpm.
\begin{itemize}[leftmargin=*,itemsep=0pt,topsep=0pt,partopsep=0pt]
    \item \textit{\textbf{Uniform merge (\textsc{rs})}} uses the equal-weight configuration of Rewarded Soups ~\citep{rame2023rewardedsoups}, $\boldsymbol{\theta}(\mathbf 1/L)$, which is also \bpm's policy before observing persona-specific pairwise feedback. And 
    
    \item \textit{\textbf{\textsc{base} policy}} is the shared base model $\boldsymbol{\theta}_0$ before reward-specific expert training.

    \item \textit{\textbf{Ridge merge (\textsc{rm})}} fits an unconstrained coefficient vector with $\ell_2$-regularized logistic regression and maps it to the simplex through a softmax, returning a penalized point estimate in place of \bpm's posterior, which represents the point-estimate preference recovery of PReF~\citep{shenfeld2025pref} and LoRe~\citep{bose2025lore}, adapted to our merge family.

    \item \textit{\textbf{Maximum likelihood and maximum a posteriori merging (\textsc{mle}/\textsc{map})}} fit the same Bradley--Terry preference model as \bpm. \textsc{mle} omits the priors, whereas \textsc{map} uses the joint posterior mode in transformed coordinates, including the transformation Jacobian (Appendix~\ref{app:analysis}).
    
    \item \textit{\textbf{Adapted Spectral Souping (\textsc{ss-a})}} uses the coefficient estimator of~\citet{chow2026spectral} with the same offline preference pairs, replacing policy-derived features with standardized reward-score differences.
    It fits signed coefficients under an $\ell_1$ constraint and uses exact delta merging to accommodate negative coefficients (Appendix~\ref{app:further}).
    
    \item \textit{\textbf{Preference-based reranking (\textsc{bt-bon})}} fits a Bradley-Terry reward model to the preference pairs and selects the highest-scoring output among eight samples from the uniform merge~\citep{nakano2021webgpt, gui2024bonbon}, which applies personalization through output selection.

    \item \textit{\textbf{Language-to-Weights (\textsc{l2w})}} follows the language-guided weighting approach of Promptable Behaviors~\citep{Hwang2024promptablebehavior}, using gpt-oss-20b~\citep{openai2025gptoss120bgptoss20bmodel} to infer reward weights from the persona description, which is not provided to \bpm.
    The inferred weights are normalized and used as merge coefficients.

    \item \textit{\textbf{Inverse Constitutional AI (\textsc{icai})}} uses gpt-oss-20b to infer a constitution from the preference pairs, following~\citet{findeis2025icai}.
    The same constitution is supplied as a system prompt to either the base policy \textsc{icai-b}, as in the original setting, or the uniform merge \textsc{icai-m}, \bpm's starting policy.
    
    \item \textit{\textbf{Direct preference optimization (\textsc{dpo})}} fine-tunes the base policy on each persona's preference pairs~\citep{rafailov2023direct}, requiring per-persona training that \bpm avoids by merging shared experts.
    This compared method is distinct from the DPO-trained expert libraries described above.
\end{itemize}
Further details on compared method configurations, training budgets, and hyperparameter selection are provided in Appendix~\ref{app:baselines}.

\paragraph{Test evaluation.}
We evaluate on $451$ radiology reports, $500$ images, and $500$ story prompts with the same two-order judgment protocol.
Decided win rates are $N_{\mathrm{win}}/(N_{\mathrm{win}}+N_{\mathrm{loss}})$; supplementary scores retain valid ties and order-inconsistent judgments at half credit, with assignment sensitivity reported in Appendix~\ref{app:evaluation-accounting}.
Macro rates equally weight personas and average over feedback draws.
The 95\% input-bootstrap intervals condition on the observed personas, feedback subsets, and expert library (Appendix~\ref{app:ci}).
\section{Results}\label{sec:results}

\subsection{Personalization Performance}\label{sec:results-main}
\definecolor{highcolor}{HTML}{137333}
\definecolor{lowcolor}{HTML}{A50E0E}
\definecolor{midcolor}{HTML}{FFFFFF}
\def\mrbcolorscale{50}
\def\mrbmaxintensity{70}
\def\mrbhldiff#1#2{%
  \begingroup
  \pgfmathsetmacro{\diffval}{#1 - #2}%
  \pgfmathtruncatemacro{\mrbint}{round(min(1, abs(\diffval) / \mrbcolorscale) * \mrbmaxintensity)}%
  \ifdim \diffval pt > 0pt
    \edef\mrbcol{highcolor!\mrbint!midcolor}%
    \expandafter\cellcolor\expandafter{\mrbcol}#1%
  \else
    \ifdim \diffval pt < 0pt
      \edef\mrbcol{lowcolor!\mrbint!midcolor}%
      \expandafter\cellcolor\expandafter{\mrbcol}#1%
    \else
      #1%
    \fi
  \fi
  \endgroup
}

\newcommand{\mrbci}[1]{{\footnotesize [#1]}}

\begin{table}[t]
\caption{\textbf{Main results.} \bpm's macro decided win rate (\%) against each method, with 95\% CIs, averaged over three sampled feedback subsets. 
Rows are not comparable with each other.
\textcolor{highcolor}{Green} favors \bpm and \textcolor{lowcolor}{red} favors the compared method, with intensity proportional to the distance from $50$.
}
\vspace{-0.8em}
\centering
\resizebox{\textwidth}{!}{%
\begin{tabular}{lcccccc}
\toprule
 & \multicolumn{2}{c}{\textbf{\textsc{Summarization}}} & \multicolumn{2}{c}{\textbf{\textsc{Image Captioning}}} & \multicolumn{2}{c}{\textbf{\textsc{Story Generation}}} \\
\cmidrule(lr){2-3}\cmidrule(lr){4-5}\cmidrule(lr){6-7}
& $n=100$ & $n=500$ & $n=100$ & $n=500$ & $n=100$ & $n=500$ \\
\midrule
\textsc{RS} & \mrbhldiff{91.7}{50}\,\mrbci{90.3, 93.1} & \mrbhldiff{88.4}{50}\,\mrbci{86.2, 90.4} & \mrbhldiff{77.1}{50}\,\mrbci{74.7, 79.3} & \mrbhldiff{78.7}{50}\,\mrbci{76.6, 80.7} & \mrbhldiff{64.3}{50}\,\mrbci{62.5, 66.1} & \mrbhldiff{76.1}{50}\,\mrbci{74.4, 77.8} \\
\textsc{Base} & \mrbhldiff{76.8}{50}\,\mrbci{74.6, 79.0} & \mrbhldiff{76.7}{50}\,\mrbci{74.0, 79.3} & \mrbhldiff{74.6}{50}\,\mrbci{72.2, 76.7} & \mrbhldiff{76.4}{50}\,\mrbci{74.4, 78.2} & \mrbhldiff{71.3}{50}\,\mrbci{69.4, 73.1} & \mrbhldiff{81.4}{50}\,\mrbci{79.9, 82.8} \\
\textsc{RM} & \mrbhldiff{74.9}{50}\,\mrbci{73.2, 76.7} & \mrbhldiff{84.1}{50}\,\mrbci{81.8, 86.2} & \mrbhldiff{72.1}{50}\,\mrbci{70.3, 74.0} & \mrbhldiff{75.3}{50}\,\mrbci{73.5, 77.0} & \mrbhldiff{57.3}{50}\,\mrbci{56.1, 58.6} & \mrbhldiff{71.5}{50}\,\mrbci{70.2, 72.7} \\
\textsc{MLE}
& \mrbhldiff{28.4}{50}\,\mrbci{26.6, 30.3}
& \mrbhldiff{43.9}{50}\,\mrbci{38.8, 49.0}
& \mrbhldiff{44.7}{50}\,\mrbci{43.0, 46.4}
& \mrbhldiff{45.4}{50}\,\mrbci{43.3, 47.6}
& \mrbhldiff{43.4}{50}\,\mrbci{42.2, 44.7}
& \mrbhldiff{46.3}{50}\,\mrbci{45.0, 47.6} \\

\textsc{MAP}
& \mrbhldiff{28.9}{50}\,\mrbci{26.6, 31.2}
& \mrbhldiff{51.1}{50}\,\mrbci{46.1, 56.0}
& \mrbhldiff{57.9}{50}\,\mrbci{55.5, 60.2}
& \mrbhldiff{58.8}{50}\,\mrbci{56.4, 61.3}
& \mrbhldiff{49.5}{50}\,\mrbci{48.2, 50.7}
& \mrbhldiff{51.9}{50}\,\mrbci{50.8, 53.1} \\

\textsc{SS-A}\tablefootnote{\textsc{ss-a} uses an adapted coefficient estimator
with exact delta merging while \bpm uses factor-averaged LoRA merging. Details and a same-operator control appear in Appendix~\ref{app:ss}.}
& \mrbhldiff{24.3}{50}\,\mrbci{21.7, 26.9}
& \mrbhldiff{50.2}{50}\,\mrbci{44.6, 55.9}
& \mrbhldiff{15.6}{50}\,\mrbci{14.5, 16.8}
& \mrbhldiff{19.2}{50}\,\mrbci{17.7, 20.7}
& \mrbhldiff{33.2}{50}\,\mrbci{32.0, 34.5}
& \mrbhldiff{33.7}{50}\,\mrbci{32.3, 35.0} \\
\textsc{BT-BoN} & \mrbhldiff{82.5}{50}\,\mrbci{80.7, 84.1} & \mrbhldiff{85.1}{50}\,\mrbci{82.7, 87.3} & \mrbhldiff{67.6}{50}\,\mrbci{65.9, 69.4} & \mrbhldiff{69.4}{50}\,\mrbci{67.7, 71.0} & \mrbhldiff{56.8}{50}\,\mrbci{55.2, 58.5} & \mrbhldiff{67.1}{50}\,\mrbci{65.3, 68.7} \\
\midrule
\textsc{L2W} & \mrbhldiff{59.6}{50}\,\mrbci{57.6, 61.6} & \mrbhldiff{81.9}{50}\,\mrbci{78.4, 85.3} & \mrbhldiff{72.0}{50}\,\mrbci{69.7, 74.2} & \mrbhldiff{76.2}{50}\,\mrbci{74.5, 77.9} & \mrbhldiff{58.9}{50}\,\mrbci{57.3, 60.4} & \mrbhldiff{72.4}{50}\,\mrbci{71.0, 73.8} \\
\textsc{ICAI-M} & \mrbhldiff{74.4}{50}\,\mrbci{72.6, 76.4} & \mrbhldiff{88.6}{50}\,\mrbci{86.7, 90.4} & \mrbhldiff{48.2}{50}\,\mrbci{46.6, 49.8} & \mrbhldiff{59.5}{50}\,\mrbci{58.3, 60.7} & \mrbhldiff{44.0}{50}\,\mrbci{42.7, 45.3} & \mrbhldiff{52.6}{50}\,\mrbci{51.3, 53.9} \\
\textsc{ICAI-B}  & \mrbhldiff{63.9}{50}\,\mrbci{62.0, 65.9} & \mrbhldiff{73.2}{50}\,\mrbci{70.9, 75.4} & \mrbhldiff{50.2}{50}\,\mrbci{48.8, 51.6} & \mrbhldiff{60.3}{50}\,\mrbci{59.2, 61.4} & \mrbhldiff{52.0}{50}\,\mrbci{50.6, 53.4} & \mrbhldiff{61.0}{50}\,\mrbci{59.6, 62.4} \\
\textsc{DPO} & \mrbhldiff{37.9}{50}\,\mrbci{35.6, 40.0} & \mrbhldiff{57.6}{50}\,\mrbci{54.2, 60.6} & \mrbhldiff{7.9}{50}\,\mrbci{7.1, 8.8} & \mrbhldiff{16.0}{50}\,\mrbci{14.7, 17.3} & \mrbhldiff{36.1}{50}\,\mrbci{34.2, 38.0} & \mrbhldiff{49.0}{50}\,\mrbci{46.9, 51.3} \\
\bottomrule
\end{tabular}%
}
\vspace{-1em}
\label{tab:main}
\end{table}

\begin{table}[t]
\centering
\def\ci#1#2{%
  \begingroup
  \pgfmathsetmacro{\diffval}{#1 - 50}%
  \pgfmathtruncatemacro{\mrbint}{%
    round(min(1, abs(\diffval) / \mrbcolorscale)
          * \mrbmaxintensity)}%
  \ifdim \diffval pt > 0pt
    \edef\mrbcol{highcolor!\mrbint!midcolor}%
  \else
    \ifdim \diffval pt < 0pt
      \edef\mrbcol{lowcolor!\mrbint!midcolor}%
    \else
      \def\mrbcol{midcolor}%
    \fi
  \fi
  \expandafter\cellcolor\expandafter{\mrbcol}%
  \shortstack{#1\\{\footnotesize [#2]}}%
  \endgroup
}

\caption{\textbf{Robustness results.} 
BPM's macro-weighted decided win rate (\%) with 95\% CI. Cell colors as in Table~\ref{tab:main}.
\textbf{(a)} Experts trained with GRPO, RLOO, or DPO on the summarization task. 
Each library is compared with its own uniform merge, so columns do not rank training algorithms.
\textbf{(b)} Rubric-based personas replace the prompted personas, on \textsc{Story generation}.
}
\vspace{-0.8em}
\resizebox{\textwidth}{!}{%
\begin{tabular}{@{}cccccccc@{}}
\toprule
 & \multicolumn{3}{c}{\textbf{(a) \textsc{Expert Training Algorithm}}}
 & \multicolumn{4}{c}{\textbf{(b) \textsc{Rubric-based Persona}}}\\
\cmidrule(lr){2-4}\cmidrule(lr){5-8}
 & GRPO & RLOO & DPO
 & \textsc{Uniform Merge} & \textsc{Base Policy} & \textsc{Ridge-Merge} & \textsc{BT-BoN} \\
\midrule
$n=100$
 & \ci{91.7}{90.3,93.1}
 & \ci{93.9}{92.6,95.1}
 & \ci{77.7}{74.2,81.1}
 & \ci{65.7}{63.7,67.7}
 & \ci{70.8}{69.0,72.8}
 & \ci{60.4}{59.1,61.7}
 & \ci{63.4}{61.7,65.1} \\
$n=500$
 & \ci{88.4}{86.2,90.4}
 & \ci{97.7}{96.9,98.4}
 & \ci{90.3}{88.2,92.1}
 & \ci{74.7}{72.8,76.5}
 & \ci{77.8}{76.2,79.3}
 & \ci{71.4}{70.2,72.7}
 & \ci{71.1}{69.3,72.7} \\
\bottomrule
\end{tabular}%
}
\vspace{-1.5em}
\label{tab:ablation}
\end{table}

\paragraph{Effectiveness of feedback-derived coefficients.}
\bpm improves on uniform merging across all three tasks at both feedback budgets (\Cref{tab:main}), showing that pairwise feedback supplies useful coefficients for personalizing the shared expert library.
On \textsc{Story Generation}, gains over uniform merging also persist with \texttt{gpt-oss-20} as an alternative judge (\Cref{fig:analysis}e).

\paragraph{Comparison with alternative methods.}
\bpm outperforms ridge merge and preference-based reranking, whereas \textsc{mle} and adapted \textsc{ss} often yield more preferred outputs, and comparisons with \textsc{map} vary by task and budget.
The lower block compares methods that use an auxiliary LLM for personalization or require per-persona policy training,
\S\ref{sec:analysis} \S\ref{sec:analysis} examines what \bpm's posterior additionally provides in terms of coefficient stability and uncertainty.

\paragraph{Limited feedback supports personalization.}
\Cref{fig:analysis}c shows gains over uniform merge with as few as $10$ comparisons in all three tasks, demonstrating that even a small number of choices can supply useful coefficients.
These gains generally increase with the feedback budget, supporting personalization from limited feedback and further improvement as more choices become available.

%
\begin{figure}[t]
\begin{center}
\includegraphics[width=\textwidth]{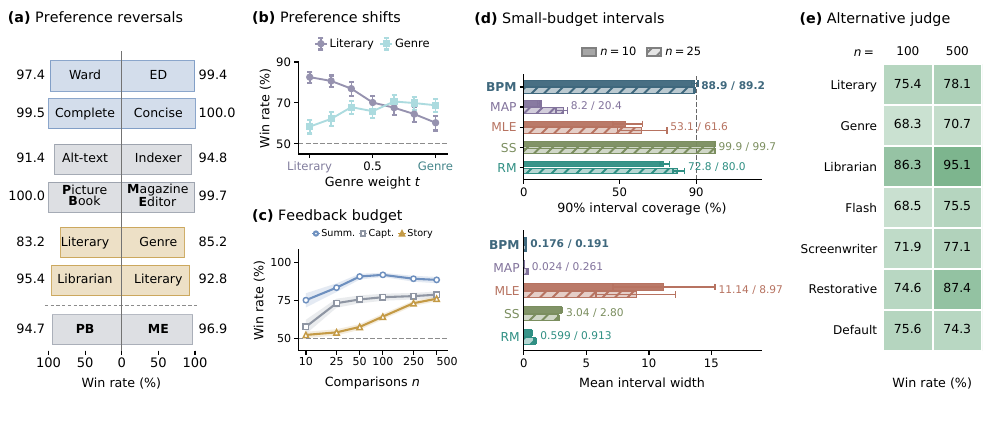}
\end{center}
\vspace{-1.8em}
\caption{\textbf{Preference behavior and coefficient uncertainty.}
\textbf{(a)}~Each persona's win rate for its corresponding model ($n=500$).
The bottom row uses human judgments ($n=100)$.
\textbf{(b)}~Win rates against uniform merge as feedback preferences shift ($n=500$).
\textbf{(c)}~Macro win rates across budgets.
\textbf{(d)}~Coverage (top) and mean width (bottom) of nominal $90\%$ intervals across tasks, whiskers spanning task ranges.
\textbf{(e)}~Win rates against uniform merge using an alternative LLM for judgements.}
\label{fig:analysis}
\end{figure}

\subsection{Preference Following}\label{sec:results-follow}
\paragraph{\bpm reflects persona-specific preferences.}
To assess whether inferred weights translate feedback into preference-specific model behavior, we select two simulated persona pairs per task and have both personas judge the same outputs from their respective models.
Each persona favors its own model in all six pairs (\Cref{fig:analysis}a).
A single author-annotator also exhibits preference reversals under the \textit{Picture Book (PB)} and \textit{Magazine Editor (ME} descriptions in a human proof-of-concept (Appendix~\ref{app:human}).
Length- and readability-matched analyses appear in Appendix~\ref{app:length}.

\paragraph{\bpm follows controlled preference shifts.}
Rubric-based personas allow us to vary feedback preferences while keeping the evaluation criteria fixed.
As shown in~\Cref{tab:ablation}b, \bpm retains its macro advantage over the compared methods on the task of story generation.
As feedback shifts from \textit{Literary} toward \textit{Genre}, win rates against uniform decrease under the fixed \textit{Literary} judge and increase overall under the fixed \textit{Genre} judge (\Cref{fig:analysis}b), supporting the responsiveness of the merged models to changes in pairwise feedback preferences.
\section{Analysis}\label{sec:analysis}

\subsection{Coefficient Stability and Output Preference}
\label{sec:analysis-stabiliy}

\paragraph{Posterior means reduce coefficient variation under limited feedback.}
To assess sensitivity to feedback sampling, we measure the mean pairwise $\ell_1$ distance between coefficients fitted to three feedback subsets, averaged over personas.
\Cref{fig:uncertainty}a shows lower variation for \bpm than \textsc{mle} at $n\leq100$ in all three tasks.
Compared with adapted \textsc{ss}, the advantage holds across tasks at $n=10$, with task-dependent differences at larger budgets.
Appendix~\ref{app:point-estimates} reports paired intervals.


\paragraph{Coefficient stability and output preference are distinct.}
\bpm improves on uniform merging while reducing coefficient
variation relative to \textsc{mle} under limited feedback.
This stability advantage does not imply higher output preference:
\textsc{mle} and adapted \textsc{ss} often yield more preferred
outputs.
Conversely, \textsc{rm} can be more stable than \bpm but is less preferred at both main-table budgets.
\bpm thus provides useful personalization with greater stability under limited feedback and a posterior over plausible
weights, rather than a uniformly superior stability--preference
trade-off (Appendix~\ref{app:point-estimates}).

\subsection{Uncertainty under Limited Feedback}\label{sec:analysis-uncertainty}

\paragraph{\bpm quantifies support for alternative reward trade-offs.}
Starting from a shared prior, the \textit{Literary} and \textit{Librarian} posteriors favor stylistically rich expression and clear, accessible storytelling, respectively (Appendix~\ref{app:uncertainty-performance}).
Their means supply persona-specific merge coefficients, while the full posteriors quantify the relative support for alternative reward trade-offs under the model and prior.
\bpm thus provides both a concrete personalization choice and a probabilistic account of which alternatives remain plausible given the feedback.

\paragraph{\bpm provides compact intervals with near-nominal coverage in matched simulations.}
In simulations under \bpm's model and prior, nominal 90\% intervals for $\boldsymbol\beta=\mathbf w/\tau$ achieve coverage of 88.9\% and 89.2\% after 10 and 25 comparisons (\Cref{fig:analysis}d).
\textsc{map}, \textsc{mle}, and \textsc{rm} bootstrap intervals under-cover, whereas \textsc{rm} Laplace and \textsc{ss} Gaussian intervals are substantially wider and approach full coverage.
Prior-only intervals have comparable coverage and widths, and prior perturbations reduce coverage in some settings (Appendix~\ref{app:coverage}).
\bpm thus provides compact uncertainty summaries with near-nominal coverage under the assumed model and prior, with limited evidence of interval contraction at these budgets.

\paragraph{Posterior uncertainty contextualizes personalization performance.}
To relate uncertainty to model behavior, \Cref{fig:uncertainty}b compares $U$ with win rates across personas within each task and budget.
Lower uncertainty is associated with higher win rates, while the association is weak on \textsc{Story Generation}.
\bpm thus complements preference scores by quantifying the coefficient ambiguity remaining for each persona (Appendix~\ref{app:uncertainty-performance}).

\begin{figure}[t]
\begin{center}
\includegraphics[width=\textwidth]{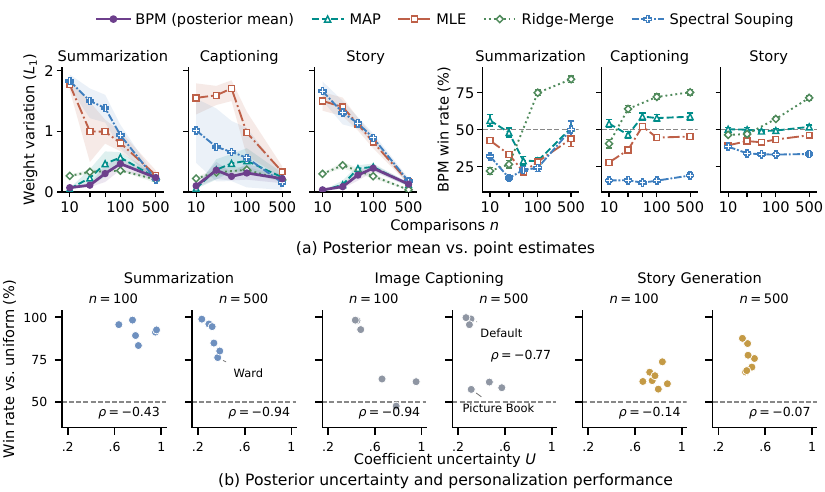}
\end{center}
\vspace{-1em}
\caption{\textbf{Posterior analysis.}
(a) Left: merge-coefficient variation across feedback subsets. Right: \bpm's decided win rates against alternative estimators (Appendix~\ref{app:point-estimates}).
(b) Posterior uncertainty $U$ versus decided win rate against uniform merging.
Each point is one persona, with both quantities averaged over three feedback subsets; $\rho$ is the within-task, fixed-budget Spearman correlation.
}
\vspace{-1.5em}
\label{fig:uncertainty}
\end{figure}

\subsection{From Inferred Weights to Model Behavior}

\paragraph{Posteriors reveal persona-specific reward trade-offs.}
To examine how feedback updates the shared prior, \Cref{fig:posterior-story} shows six reward-weight marginals for two personas on \textsc{Story Generation} at $n=500$.
The \textit{Literary} posterior favors expressive writing, whereas the
\textit{Librarian} posterior favors clear, coherent storytelling.
Beyond supplying merge coefficients through posterior means, \bpm quantifies support for alternative weight values under the model and prior.

\paragraph{Gains persist across expert libraries.}
As shown in~\Cref{tab:ablation}a, \bpm outperforms each library's own uniform merge with both RLOO- and DPO-trained experts on \textsc{Summarization}.
The gains hold for every persona at both feedback budgets, supporting feedback-based personalization of expert libraries trained with different optimization methods.

\paragraph{Inferred weights support policy training.}\label{sec:analysis-retrain}
Using inferred \textit{Ward} and \textit{Emergency Doctor (ED)} weights yields \textsc{Summarization} policies preferred over uniform-weight training and training with the other persona's weights (Appendix~\ref{app:retrain}), extending the usefulness of \bpm's inferred weights beyond model merging.
\section{Conclusion}
We present \bpm, which obtains the missing coefficients for model personalization by inferring a posterior over reward weights from pairwise choices and using its mean to merge shared experts, without per-user policy training.
Across three generation tasks, the resulting models improve on uniform merging and reflect contrasting simulated persona preferences, while controlled feedback shifts produce corresponding behavioral changes.
Gains across expert libraries trained with different optimization methods further support the usefulness of feedback-derived coefficients.
Beyond model construction, \bpm provides coefficients that vary less than \textsc{mle} across limited-feedback subsets and a posterior quantifying plausible reward weights.
In simulations under its model and prior, compact intervals for temperature-scaled reward weights achieve near-nominal coverage.
\bpm thus connects pairwise feedback to personalized models while making the remaining ambiguity in the inferred preferences explicit.

\subsection*{AI use statement}
In this work, we used generative AI tools to assist with experimental design, method implementation, and mathematical derivations.
LLMs were also used to simulate personas to provide pairwise preference judgments and evaluate generated outputs, as documented in the experimental setup and appendix.
Additionally, we used generative AI tools to assist with literature discovery, manuscript organization, drafting, and language editing.
All AI-assisted work was reviewed by the authors.
AI-generated code was checked and tested by at least one author, and the manuscript was proofread by all authors.
We take responsibility for the final content of this work, including its text, mathematical claims, experimental results, and AI-assisted artifacts.

\subsection*{Ethics statement}
This research involves human annotations to construct pairwise feedback. 
All annotation tasks were conducted by the authors of this paper, who participated voluntarily and with full knowledge of the study’s purpose, procedures, and intended use of the data. 
No external crowdsourcing or paid annotation platforms were employed. 
The study does not involve sensitive personal data, human subjects outside of the annotation task, or applications that raise privacy, security, or legal concerns. We also follow the standard research ethics protocols of our institution, with explicit approval from the IRB, for all internal annotation efforts.
The research complies with the ICLR Code of Ethics, and no conflicts of interest or sponsorship concerns are associated with this work.

\subsection*{Reproducibility statement}
The main paper and appendix describe the components needed to reproduce \bpm, including reward normalization, prior specification, posterior inference, and model merging, together with datasets and splits, expert training configurations, persona prompts, feedback collection procedures, and evaluation protocols.
We also document the compared methods' configuration, simulation settings, and confidence interval estimation procedures.
The supplementary code provides a synthetic inference demo and core pipeline components.
We will release the full implementation and experiment configurations to support reproducibility and future research.
These efforts are intended to enable the community to replicate our experiments and build upon our findings.




\bibliography{references}
\bibliographystyle{iclr2027_conference}

\appendix
\section{Method Details and Derivations}\label{app:derivations}
\subsection{Notation}\label{app:notation}
\begin{table}[t]
\centering
\caption{\textbf{Notation used in Bayesian Pairwise Merging.} Reward weights describe preference trade-offs, whereas merge coefficients determine how experts are combined.}
\label{tab:app-notation}
\small
\renewcommand{\arraystretch}{1.08}
\begin{tabular}{@{}p{.28\linewidth}p{\dimexpr.72\linewidth-2\tabcolsep\relax}@{}}
\toprule
\textbf{Symbol} & \textbf{Definition} \\
\midrule
$x, y$ & Input and generated output. \\
$L, t$ & Number of rewards and reward/expert index. \\
$\boldsymbol{\theta},\ \boldsymbol{\theta}_0$ & Model and shared base-model parameters. \\
$\{\boldsymbol{\theta}_t\}_{t=1}^{L}$ & Shared library of reward-specialized experts. \\
$\Delta^{L-1}$ & Probability simplex in $\mathbb R^L$. \\
$\boldsymbol{\lambda}\in\Delta^{L-1}$ & Merge coefficients, set to $\widehat{\mathbf w}$ in BPM. \\
$\boldsymbol{\theta}(\boldsymbol{\lambda})$ & Merged-model parameters, $\sum_{t=1}^{L}\lambda_t\boldsymbol{\theta}_t$. \\
$\pi_{\boldsymbol{\theta}}(y\mid x)$ & Model output distribution given $x$. \\
\midrule
$r_t(x,y)$ & Reward score for dimension $t$. \\
$m_t,\ \nu_t$ & Mean and standard deviation used to standardize reward $t$. \\
$\tilde r_t(x,y),\ \tilde{\mathbf r}(x,y)$ & Standardized reward score and reward vector. \\
$n, i$ & Number of comparisons and comparison index. \\
$y_i^A,\ y_i^B$ & Candidate outputs compared for input $x_i$. \\
$z_i\in\{-1,+1\}$ & Preference label, $+1$ for $A$ and $-1$ for $B$. \\
$\mathbf d_i\in[-1,1]^L$ & Normalized reward contrast for comparison $i$. \\
$\mathcal D$ & Inference data $\{(\mathbf d_i,z_i)\}_{i=1}^{n}$. \\
\midrule
$\mathbf w\in\Delta^{L-1}$ & Persona-specific reward weights. \\
$\tau>0$ & Bradley--Terry choice temperature. \\
$\mathbf w^\star,\ \tau^\star$ & Generating reward weights and temperature under Assumption~\ref{ass:bt}. \\
$\sigma(a)$ & Logistic sigmoid. \\
$\mathbf 1$ & All-ones vector in $\mathbb R^L$. \\
$s>0$ & Dirichlet parameter controlling prior dispersion. \\
$\mathcal L(\mathbf w,\tau;\mathcal D)$ & Bradley--Terry likelihood of $\mathcal D$. \\
$p(\mathbf w,\tau\mid\mathcal D)$ & Joint posterior over $\mathbf w$ and $\tau$. \\
$p(\mathbf w\mid\mathcal D)$ & Marginal posterior over $\mathbf w$. \\
$\widehat{\mathbf w}$ & Posterior mean $\mathbb E[\mathbf w\mid\mathcal D]$. \\
$U$ & Ratio of average posterior to prior standard deviations of $\mathbf w$. \\
\bottomrule
\end{tabular}
\end{table}

\begin{algorithm}[t]
\caption{Bayesian Pairwise Merging (BPM) for a persona}
\label{alg:bpm}
\small
\begin{algorithmic}[1]
\REQUIRE experts $\{\boldsymbol{\theta}_t\}_{t=1}^{L}$;
feedback $\{(x_i,y_i^A,y_i^B,z_i)\}_{i=1}^{n}$;
standardized rewards $\tilde{\mathbf r}$; parameter $s$

\STATE $\mathcal D\gets\{(\mathbf d_i,z_i)\}_{i=1}^{n}$
\hfill$\triangleright$~\Cref{eq:contrast}
\STATE sample $p(\mathbf w,\tau\mid\mathcal D)$ using NUTS;
check convergence
\hfill$\triangleright$~\Cref{eq:posterior}
\STATE compute $\widehat{\mathbf w}
=\mathbb E[\mathbf w\mid\mathcal D]$ and $U$
\hfill$\triangleright$~\Cref{eq:delivery,eq:width}
\STATE $\boldsymbol{\lambda}\gets\widehat{\mathbf w}$,
$\quad\boldsymbol{\theta}(\boldsymbol{\lambda})
\gets\sum_{t=1}^{L}\lambda_t\,\boldsymbol{\theta}_t$
\hfill$\triangleright$~\Cref{eq:family}
\RETURN $\boldsymbol{\theta}(\boldsymbol{\lambda})$ and $U$
\end{algorithmic}
\end{algorithm}

\subsection{Merging LoRA Experts}
\label{app:merge}

Each expert consists of a LoRA adapter on a shared frozen base model~\citep{hu2022lora}.
For an adapted layer, expert $t$ has the effective weight matrix
\[
\mathbf W_t=\mathbf W_0+\gamma\mathbf B_t\mathbf A_t,
\]
where $\mathbf W_0$ is the base weight matrix, $\mathbf A_t$ and $\mathbf B_t$ are the adapter factors, and $\gamma$ is the shared LoRA scaling factor.
The experts use matching adapter shapes and corresponding layers.

In \Cref{eq:family}, $\boldsymbol{\theta}_t$ represents the model parameters in the shared-base-and-adapter parameterization.
Since $\sum_{t=1}^{L}\lambda_t=1$, averaging these parameters preserves the shared base and combines the corresponding adapter factors,
\begin{equation}\label{eq:app-factor-merge}
\begin{gathered}
\overline{\mathbf A}=\sum_{t=1}^{L}\lambda_t\mathbf A_t,
\qquad
\overline{\mathbf B}=\sum_{t=1}^{L}\lambda_t\mathbf B_t,\\
\mathbf W_{\mathrm{factor}}(\boldsymbol{\lambda})
=\mathbf W_0+\gamma\overline{\mathbf B}\,\overline{\mathbf A}.
\end{gathered}
\end{equation}
Unless otherwise stated, merges use factor-averaged LoRA adapters.
Adapted SS and the exact-delta controls in Appendix~\ref{app:further}
use Equation~\ref{eq:app-delta-merge}.

\paragraph{Relation to averaging effective weight matrices.}
Averaging the effective matrices instead would give
\begin{equation}\label{eq:app-delta-merge}
\mathbf W_{\mathrm{delta}}(\boldsymbol{\lambda})
=\sum_{t=1}^{L}\lambda_t\mathbf W_t
=\mathbf W_0+\gamma\sum_{t=1}^{L}\lambda_t\mathbf B_t\mathbf A_t.
\end{equation}
The difference between the two constructions is
\begin{equation}\label{eq:app-merge-difference}
\mathbf W_{\mathrm{factor}}(\boldsymbol{\lambda})
-\mathbf W_{\mathrm{delta}}(\boldsymbol{\lambda})
=-\gamma\sum_{t=1}^{L}\lambda_t
(\mathbf B_t-\overline{\mathbf B})
(\mathbf A_t-\overline{\mathbf A}).
\end{equation}
Expanding the right-hand side gives
$\gamma(\overline{\mathbf B}\,\overline{\mathbf A}-\sum_t\lambda_t\mathbf B_t\mathbf A_t)$.
The constructions therefore agree when all $\mathbf A_t$ are identical or all $\mathbf B_t$ are identical, but need not agree after separate expert training.
Both are continuous in $\boldsymbol{\lambda}$.

\subsection{Reward Weights and Choice Temperature}
\label{app:temperature-identification}

The Bradley--Terry likelihood depends on $\mathbf w$ and $\tau$ through $\boldsymbol{\beta}=\mathbf w/\tau$.
Without a scale constraint on $\mathbf w$, the pairs $(\mathbf w,\tau)$ and $(c\mathbf w,c\tau)$ would give identical choice probabilities for every $c>0$.
The simplex constraint removes this ambiguity because
\begin{equation}\label{eq:app-beta-inverse}
\tau=\frac{1}{\mathbf 1^\top\boldsymbol{\beta}},
\qquad
\mathbf w=\frac{\boldsymbol{\beta}}{\mathbf 1^\top\boldsymbol{\beta}}.
\end{equation}
Thus, $(\mathbf w,\tau)\mapsto\boldsymbol{\beta}$ is a continuous bijection with a continuous inverse from $\Delta^{L-1}\times(0,\infty)$ onto $K\setminus\{\mathbf 0\}$, where $K=[0,\infty)^L$.

Identification from choices also depends on the reward contrasts.
Since the logistic sigmoid is strictly increasing, two parameter vectors $\boldsymbol{\beta}$ and $\boldsymbol{\beta}'$ give the same choice probabilities precisely when
$(\boldsymbol{\beta}-\boldsymbol{\beta}')^\top\mathbf d=0$ for the contrasts under consideration.
If those contrasts span $\mathbb R^L$, this implies $\boldsymbol{\beta}=\boldsymbol{\beta}'$, and \Cref{eq:app-beta-inverse} identifies $\mathbf w$ and $\tau$ separately.
For random contrasts, $\mathbb E[\mathbf d\mathbf d^\top]\succ0$ is a sufficient condition.

The temperature also accommodates different levels of choice consistency.
Because $|\mathbf w^\top\mathbf d_i|\le1$, fixing $\tau=1$ would restrict the probability of preferring $y_i^A$ to $[\sigma(-1),\sigma(1)]\approx[0.27,0.73]$.
Inferring $\tau$ permits more decisive probabilities without changing the normalization of the reward contrasts.

\subsection{Prior Specification}\label{app:prior}

BPM uses the independent priors $\mathbf w\sim\operatorname{Dir}(s\mathbf 1)$ and $\log\tau\sim\mathcal N(\log2,1)$, shared by personas within a task.
For $L\ge2$, each reward weight has a marginal distribution
\[
w_t\sim\operatorname{Beta}(s,(L-1)s),
\qquad
\mathbb E[w_t]=\frac1L,
\qquad
\operatorname{Var}(w_t)=\frac{L-1}{L^2(Ls+1)}.
\]
For $t\ne u$, the covariance is
\[
\operatorname{Cov}(w_t,w_u)=-\frac{1}{L^2(Ls+1)}.
\]
The total Dirichlet concentration is $Ls$.
Increasing $s$ reduces prior dispersion while preserving the uniform mean.

\paragraph{Variance matching.}
We set $s$ by matching the average marginal variance of a reference Dirichlet-tree prior.
The tree groups rewards using their contrasts on a separate calibration split.
Its construction uses average-linkage clustering with one minus the contrast correlation as the distance, followed by a strict-majority consensus over Bayesian-bootstrap input reweightings.
Each internal node assigns probability mass to its children using a symmetric Dirichlet distribution with unit parameters.

Let $q_T$ denote the resulting distribution over leaf weights and define
\[
v_T=\frac1L\sum_{t=1}^{L}\operatorname{Var}_{q_T}(w_t).
\]
Matching this quantity with the marginal variance of the symmetric prior gives
\begin{equation}\label{eq:app-variance-matching}
\frac{L-1}{L^2(Ls+1)}=v_T,
\qquad
s=\frac1L\left(\frac{L-1}{L^2v_T}-1\right),
\end{equation}
provided $0<v_T<(L-1)/L^2$.
The tree determines only $s$.
Posterior inference uses the flat prior $\operatorname{Dir}(s\mathbf 1)$ and does not retain the tree structure or its covariance pattern.

The reference moments can be computed from the independent node allocations.
Write $\boldsymbol{\rho}_v\sim\operatorname{Dir}(\boldsymbol{\alpha}_v)$ for the allocation at internal node $v$, with $\alpha_{v,c}=1$ for each child $c$ and $\alpha_{v,0}=\sum_c\alpha_{v,c}$.
The weight of leaf $t$ is the product of allocations along its path,
$w_t=\prod_{(v,c)\in\operatorname{path}(t)}\rho_{v,c}$.
Independence across nodes gives
\[
\mathbb E_{q_T}[w_t]
=\prod_{(v,c)\in\operatorname{path}(t)}
\frac{\alpha_{v,c}}{\alpha_{v,0}},
\qquad
\mathbb E_{q_T}[w_t^2]
=\prod_{(v,c)\in\operatorname{path}(t)}
\frac{\alpha_{v,c}(\alpha_{v,c}+1)}
{\alpha_{v,0}(\alpha_{v,0}+1)}.
\]
Subtracting the squared first moment yields the variance used in $v_T$.

\paragraph{Selection protocol.}
The variance-matching rule was fixed before the reported posterior fits, following a pilot on an earlier clinical expert library.
In that pilot, $s=1$ gave posterior means closer to uniform and poorer preference predictions on held-out comparisons.
The pilot evaluated two personas and did not construct personalized models.
For the reported tasks, calibration uses reward contrasts without persona labels and yields $s\approx0.199$, $0.247$, and $0.991$ for radiology report summarization, image captioning, and story generation, respectively.
These values remain fixed across personas and feedback budgets within each task.
The reported libraries were not evaluated under alternative values of $s$, so the pilot does not establish prior sensitivity for the final experiments.

\paragraph{Temperature prior and support.}
The prior on $\log\tau$ induces the density
\[
p(\tau)=\frac{1}{\tau\sqrt{2\pi}}
\exp\!\left[-\frac{(\log\tau-\log2)^2}{2}\right],
\qquad \tau>0.
\]
Together, the priors have full support on $\Delta^{L-1}\times(0,\infty)$ in the relative topology.
Although the Dirichlet prior assigns zero probability to the simplex boundary, every relative neighborhood of a boundary point has positive probability.
This support property is used in Proposition~\ref{prop:target}.

\subsection{Posterior Mean and Uncertainty}\label{app:posterior-summaries}
The posterior mean is a valid simplex vector because each component is nonnegative and
$\mathbf 1^\top\widehat{\mathbf w}=\mathbb E[\mathbf 1^\top\mathbf w\mid\mathcal D]=1$.
For any estimate $\mathbf a\in\Delta^{L-1}$,
\begin{equation}\label{eq:app-mean-loss}
\mathbb E[\|\mathbf w-\mathbf a\|_2^2\mid\mathcal D]
=\mathbb E[\|\mathbf w-\widehat{\mathbf w}\|_2^2\mid\mathcal D]
+\|\mathbf a-\widehat{\mathbf w}\|_2^2.
\end{equation}
To obtain this identity, expand $\mathbf w-\mathbf a=(\mathbf w-\widehat{\mathbf w})+(\widehat{\mathbf w}-\mathbf a)$ and use
$\mathbb E[\mathbf w-\widehat{\mathbf w}\mid\mathcal D]=\mathbf 0$ to eliminate the cross term.
Consequently,
\[
\widehat{\mathbf w}
=\operatorname*{arg\,min}_{\mathbf a\in\Delta^{L-1}}
\mathbb E[\|\mathbf w-\mathbf a\|_2^2\mid\mathcal D].
\]
This is an optimality statement for reward-weight estimation under squared loss.
Its connection to model optimality is established separately in Appendix~\ref{app:correspondence}.

\paragraph{Relative posterior spread.}
All coordinates have the same prior variance, so \Cref{eq:width} reduces to
\begin{equation}\label{eq:app-uncertainty}
U=\frac{\tfrac1L\sum_{t=1}^{L}\operatorname{sd}[w_t\mid\mathcal D]}
{\sqrt{(L-1)/(L^2(Ls+1))}}.
\end{equation}
Without feedback, the posterior equals the prior and $U=1$.
Smaller values indicate lower average marginal uncertainty relative to the prior.
The statistic summarizes the remaining uncertainty in reward weights under the preference model, and its relationship to personalization performance is examined in \S\ref{sec:analysis}.
The definition does not require $U\le1$ or a decrease after every additional comparison.

\subsection{Information Provided by Reward Contrasts}\label{app:identifiability}
The reward contrasts determine which directions in weight space enter the likelihood.
Condition on the observed contrasts and let
\[
H=\{\mathbf h\in\mathbb R^L:\mathbf 1^\top\mathbf h=0\},
\qquad
S=\operatorname{span}\{P_H\mathbf d_1,\dots,P_H\mathbf d_n\},
\]
where $P_H$ is the orthogonal projection onto the simplex tangent space $H$.

\begin{proposition}[Conditional invariance outside the contrast span]
\label{prop:span}
Write $\mathbf w=\mathbf 1/L+\mathbf u+\mathbf v$, where $\mathbf u\in S$ and $\mathbf v\in H\cap S^\perp$.
The likelihood in \Cref{eq:likelihood} depends on $\mathbf w$ only through $\mathbf u$.
For any prior on $\mathbf w$ independent of $\tau$, the conditional posterior satisfies
\[
p(\mathbf v\mid\mathbf u,\mathcal D)=p(\mathbf v\mid\mathbf u)
\]
almost surely, and $\dim S\le\min(n,L-1)$.
\end{proposition}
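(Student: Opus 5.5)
The plan is to prove the three claims in order: likelihood dependence only on $\mathbf u$, the conditional-posterior identity, and the dimension bound.

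\textbf{Step 1 (likelihood depends only on $\mathbf u$).} For $\mathbf w\in\Delta^{L-1}$, write $\mathbf w-\mathbf 1/L\in H$. Since $H=S\oplus(H\cap S^\perp)$, the decomposition $\mathbf w=\mathbf 1/L+\mathbf u+\mathbf v$ is unique, and $(\mathbf u,\mathbf v)$ is a linear reparametrization of $\mathbf w$ on the affine hull of the simplex. For each comparison decompose $\mathbf d_i=P_H\mathbf d_i+\tfrac{\mathbf 1^\top\mathbf d_i}{L}\mathbf 1$. Then
\[
\mathbf w^\top\mathbf d_i=\frac{\mathbf 1^\top\mathbf d_i}{L}\,\mathbf 1^\top\mathbf w+(\mathbf u+\mathbf v)^\top P_H\mathbf d_i .
\]
Here I use $\mathbf 1^\top\mathbf w=1$, and that $\mathbf 1/L$ is orthogonal to $P_H\mathbf d_i$. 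Because $\mathbf v\in S^\perp$ and $P_H\mathbf d_i\in S$, the term $\mathbf v^\top P_H\mathbf d_i$ vanishes. So $\mathbf w^\top\mathbf d_i=\tfrac{\mathbf 1^\top\mathbf d_i}{L}+\mathbf u^\top P_H\mathbf d_i$, which does not depend on $\mathbf v$. Substituting into \Cref{eq:likelihood} shows $\mathcal L(\mathbf w,\tau;\mathcal D)=g(\mathbf u,\tau)$ for some function $g$.

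\textbf{Step 2 (conditional posterior of $\mathbf v$).} Work with densities with respect to Lebesgue measure on $S\times(H\cap S^\perp)\times(0,\infty)$. This is legitimate because the reparametrization is linear with constant Jacobian; for priors without densities the same argument goes through with the likelihood as a Radon--Nikodym factor. By \Cref{eq:posterior} and prior independence of $\mathbf w$ and $\tau$,
\[
p(\mathbf u,\mathbf v,\tau\mid\mathcal D)\propto g(\mathbf u,\tau)\,p(\mathbf u,\mathbf v)\,p(\tau).
\]
Integrating out $\tau$ gives $p(\mathbf u,\mathbf v\mid\mathcal D)\propto G(\mathbf u)\,p(\mathbf u)\,p(\mathbf v\mid\mathbf u)$, where $G(\mathbf u)=\int g(\mathbf u,\tau)p(\tau)\,d\tau$. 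Dividing by the marginal $p(\mathbf u\mid\mathcal D)\propto G(\mathbf u)p(\mathbf u)$ cancels $G(\mathbf u)p(\mathbf u)$. This yields $p(\mathbf v\mid\mathbf u,\mathcal D)=p(\mathbf v\mid\mathbf u)$ wherever $p(\mathbf u\mid\mathcal D)>0$.

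The "almost surely" qualifier covers the null set where the conditional is undefined. For that I would note $G>0$ everywhere, since $\sigma>0$. Hence the posterior and prior marginals of $\mathbf u$ are mutually absolutely continuous, and prior-a.s.\ equals posterior-a.s.

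\textbf{Step 3 (dimension bound).} $S$ is spanned by $n$ vectors, so $\dim S\le n$. Also $S\subseteq H$ and $\dim H=L-1$, so $\dim S\le L-1$.

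\textbf{Main obstacle.} Step 2 carries the only real subtlety: stating the conditional-density manipulation rigorously when $\mathbf u$ and $\mathbf v$ are constrained jointly by the simplex, so their support is not a product. I would handle this by never factorizing the support. All identities are kept in terms of the joint prior density, and I condition via the disintegration, so the support constraint is automatically inherited by $p(\mathbf v\mid\mathbf u)$.
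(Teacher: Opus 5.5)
Your proposal is correct and follows essentially the same route as the paper. The orthogonal decomposition removes $\mathbf v^\top P_H\mathbf d_i$ and $\mathbf 1^\top P_H\mathbf d_i$. Prior independence of $\mathbf w$ and $\tau$ lets the $\tau$-integral $G(\mathbf u)$ factor out and cancel when conditioning on $\mathbf u$. The dimension bound follows from $S\subseteq H$ being spanned by $n$ vectors.

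Your extra observation that $G>0$, so the prior and posterior marginals of $\mathbf u$ are mutually absolutely continuous, usefully justifies the ``almost surely'' qualifier that the paper leaves implicit.
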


\begin{proof}
Since $\mathbf w-\mathbf 1/L\in H$, the orthogonal decomposition into $S$ and $H\cap S^\perp$ exists and is unique.
For each contrast,
\[
\mathbf w^\top\mathbf d_i
=\frac1L\mathbf 1^\top\mathbf d_i
+\mathbf u^\top P_H\mathbf d_i,
\]
because $\mathbf v$ is orthogonal to both $P_H\mathbf d_i$ and $\mathbf 1$.
Hence the likelihood is a function $g(\mathbf u,\tau)$, independent of $\mathbf v$.
Prior independence of $\mathbf w$ and $\tau$ gives
\[
p(\mathbf u,\mathbf v\mid\mathcal D)
\propto p(\mathbf u,\mathbf v)
\int g(\mathbf u,\tau)p(\tau)\,\mathrm d\tau.
\]
The integral depends only on $\mathbf u$ and cancels when conditioning on $\mathbf u$, proving the claimed equality.
Finally, $S$ is generated by $n$ vectors in the $(L-1)$-dimensional space $H$.
\end{proof}

This invariance is conditional on the informed component $\mathbf u$.
The marginal posterior is
\[
p(\mathbf v\mid\mathcal D)
=\int p(\mathbf v\mid\mathbf u)
\,p(\mathbf u\mid\mathcal D)\,\mathrm d\mathbf u,
\]
so it can change through prior dependence between $\mathbf u$ and $\mathbf v$.
The proposition identifies directions that the comparisons do not directly distinguish, rather than requiring every marginal distribution in those directions to remain unchanged.
Its dimension bound also distinguishes the number of comparisons from their geometric coverage.
Additional comparisons need not add independent directions, and the bound alone does not determine $U$ or a posterior contraction rate.

\subsection{Posterior Concentration}\label{app:posterior-concentration}
We next characterize the population target of Bayesian preference inference.
The result allows the Bradley--Terry model to be misspecified and identifies the generating parameters when Assumption~\ref{ass:bt} holds.

\begin{proposition}[Posterior concentration]
\label{prop:target}
Let the labeled comparisons $(\mathbf d_i,z_i)$ be independent and identically distributed, with $\|\mathbf d_i\|_\infty\le1$, $z_i\in\{-1,+1\}$, and $\mathbb E[\mathbf d\mathbf d^\top]\succ0$.
Suppose the expected log-loss
\[
R(\mathbf w,\tau)
=\mathbb E\!\left[-\log\sigma\!\left(
\frac{z\,\mathbf w^\top\mathbf d}{\tau}\right)\right]
\]
attains its minimum on $\Delta^{L-1}\times(0,\infty)$.
The minimizer $(\mathbf w^\circ,\tau^\circ)$ is unique.
Under any fixed probability prior with full support on this parameter space, the posterior concentrates at $(\mathbf w^\circ,\tau^\circ)$ almost surely as $n\to\infty$, and $\widehat{\mathbf w}\to\mathbf w^\circ$ almost surely.
Under Assumption~\ref{ass:bt}, the minimizer is $(\mathbf w^\star,\tau^\star)$.
\end{proposition}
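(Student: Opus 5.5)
The proof works in the reparameterization $\boldsymbol\beta=\mathbf w/\tau$ of Appendix~\ref{app:temperature-identification}. By \Cref{eq:app-beta-inverse}, $(\mathbf w,\tau)\mapsto\boldsymbol\beta$ is a homeomorphism onto $K\setminus\{\mathbf 0\}$ with $K=[0,\infty)^L$. Write the risk as
\[
R(\boldsymbol\beta)=\mathbb E\bigl[\ell(z\,\boldsymbol\beta^\top\mathbf d)\bigr],\qquad \ell(a)=-\log\sigma(a)=\log(1+e^{-a}).
\]

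\textbf{Uniqueness.} $\ell$ is convex with $\ell''(a)=\sigma(a)\sigma(-a)>0$. Since $\|\mathbf d\|_\infty\le1$, on any bounded set of $\boldsymbol\beta$ the Hessian satisfies
\[
\nabla^2R(\boldsymbol\beta)=\mathbb E\bigl[\ell''(z\boldsymbol\beta^\top\mathbf d)\,\mathbf d\mathbf d^\top\bigr]\succeq c\,\mathbb E[\mathbf d\mathbf d^\top]\succ0,
\]
so $R$ is strictly convex on $\mathbb R^L$. The set $K\setminus\{\mathbf 0\}$ is not convex, but if $\boldsymbol\beta_1\neq\boldsymbol\beta_2$ are both minimizers there, the open segment between them lies in $K$. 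It avoids $\mathbf 0$ unless the two are negatively proportional, which cannot happen for nonzero vectors in the orthant. Strict convexity then yields a point of strictly smaller risk, a contradiction. So the minimizer $\boldsymbol\beta^\circ$ is unique, and so is $(\mathbf w^\circ,\tau^\circ)$.

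\textbf{The well-specified case.} Under Assumption~\ref{ass:bt}, conditional on $\mathbf d$, $R$ is the expected cross-entropy between $\mathrm{Bernoulli}(\sigma(\boldsymbol\beta^{\star\top}\mathbf d))$ and $\mathrm{Bernoulli}(\sigma(\boldsymbol\beta^\top\mathbf d))$. Gibbs' inequality shows this is minimized by $\boldsymbol\beta^\star=\mathbf w^\star/\tau^\star$, so uniqueness gives $(\mathbf w^\circ,\tau^\circ)=(\mathbf w^\star,\tau^\star)$.

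\textbf{Concentration.} I will take a Schwartz-type route written directly for the Gibbs posterior $\pi_n(A)\propto\int_A e^{-nR_n}\,d\Pi$, where $R_n(\boldsymbol\beta)=\frac1n\sum_i\ell(z_i\boldsymbol\beta^\top\mathbf d_i)$. This formulation needs no correct specification.
\begin{enumerate}[leftmargin=*,itemsep=0pt]
\item \emph{Uniform convergence on compacts.} Each $\ell(z\boldsymbol\beta^\top\mathbf d)$ is Lipschitz in $\boldsymbol\beta$ with constant $\|\mathbf d\|_1\le L$. A uniform SLLN (finite net plus Lipschitz control) gives $\sup_{\|\boldsymbol\beta\|\le M}|R_n-R|\to0$ almost surely.
\item \emph{Denominator lower bound.} By continuity of $R$ and full prior support, the neighborhood $B_\delta=\{R<R(\boldsymbol\beta^\circ)+\delta\}$ has positive prior mass. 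Hence the normalizer is at least $\Pi(B_\delta)e^{-n(R^\circ+2\delta)}$ eventually.
\item \emph{Numerator upper bound, local part.} Outside an open neighborhood $V$ of $\boldsymbol\beta^\circ$, on a compact shell $\{\|\boldsymbol\beta\|\le M\}\setminus V$, uniqueness and continuity give $R\ge R^\circ+\eta$. Uniform convergence then bounds the numerator there by $e^{-n(R^\circ+\eta/2)}$.
\end{enumerate}

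\textbf{Main obstacle.} The hardest steps are the unbounded region $\|\boldsymbol\beta\|>M$, which corresponds to $\tau\to0$, and the boundary near $\boldsymbol\beta=\mathbf 0$, which corresponds to $\tau\to\infty$ and at which $R=\log2$. For the latter, $R^\circ<\log 2$ follows from attainment of the minimum together with a first-order argument, unless $\mathbb E[z\mathbf d]$ has no positive direction. In that degenerate case I will instead argue that $\tau$ drifts and treat it as an excluded case, relying on the attainment hypothesis.

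For large $\|\boldsymbol\beta\|$, I use convexity of $R_n$ along rays from $\boldsymbol\beta^\circ$: once $R_n$ exceeds $R^\circ+\eta/2$ on the sphere $\|\boldsymbol\beta-\boldsymbol\beta^\circ\|=M$, it exceeds it on the entire exterior. This bounds the tail numerator by $e^{-n(R^\circ+\eta/2)}$ times the prior mass, which is at most $1$. Choosing $\delta<\eta/8$ then makes the ratio of numerator to normalizer vanish exponentially. This gives almost-sure weak concentration at $\boldsymbol\beta^\circ$, and mapping back through the homeomorphism gives concentration at $(\mathbf w^\circ,\tau^\circ)$.

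\textbf{Posterior mean.} Finally, $\mathbf w$ is bounded on $\Delta^{L-1}$, so concentration of the marginal posterior at $\mathbf w^\circ$ gives
\[
\|\widehat{\mathbf w}-\mathbf w^\circ\|\le\epsilon+\operatorname{diam}(\Delta^{L-1})\,\pi_n(\|\mathbf w-\mathbf w^\circ\|>\epsilon)\to\epsilon,
\]
and letting $\epsilon\to0$ yields $\widehat{\mathbf w}\to\mathbf w^\circ$ almost surely.
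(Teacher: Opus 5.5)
Your proposal is correct and follows the paper's proof in all essentials: the $\boldsymbol\beta=\mathbf w/\tau$ reparameterization, strict convexity from the Hessian bound, uniform convergence on compacts via the strong law plus Lipschitz control, extension of the sphere bound to the whole exterior by convexity of $R_n$ along rays from $\boldsymbol\beta^\circ$, a prior-mass lower bound on the normalizer, and identification of the target through the Kullback--Leibler (Gibbs) decomposition. The region near the origin, which you flag as an obstacle, needs no separate first-order argument, for two reasons: the paper observes $R(\mathbf 0)=\lim_{t\downarrow0}R(t\boldsymbol\beta^\circ)\ge R(\boldsymbol\beta^\circ)$, so strict convexity makes $\boldsymbol\beta^\circ$ the unique minimizer on all of $K$; and in any case the ray-convexity step from a sphere of radius $\varepsilon<\|\boldsymbol\beta^\circ\|_2$ already covers every point near $\mathbf 0$ (your degenerate case is simply excluded by the attainment hypothesis).
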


\begin{proof}
Use the parameterization $\boldsymbol{\beta}=\mathbf w/\tau$ from Appendix~\ref{app:temperature-identification} and set $K=[0,\infty)^L$.
Define
\[
\begin{gathered}
\ell(\boldsymbol{\beta};\mathbf d,z)
=\log(1+\exp(-z\boldsymbol{\beta}^\top\mathbf d)),
\qquad
R(\boldsymbol{\beta})=\mathbb E[\ell(\boldsymbol{\beta};\mathbf d,z)],\\
R_n(\boldsymbol{\beta})=\frac1n\sum_{i=1}^{n}
\ell(\boldsymbol{\beta};\mathbf d_i,z_i).
\end{gathered}
\]
The likelihood is $\exp(-nR_n)$.
The assumed minimizer corresponds to a finite, nonzero vector $\boldsymbol{\beta}^\circ\in K$.

\emph{Uniqueness.}
The loss is finite and convex, with Hessian
\[
\nabla^2\ell(\boldsymbol{\beta};\mathbf d,z)
=\sigma(a)\sigma(-a)\,\mathbf d\mathbf d^\top,
\qquad a=z\boldsymbol{\beta}^\top\mathbf d.
\]
Bounded contrasts bound the gradient and Hessian, justifying differentiation under the expectation.
On each compact parameter set, $|a|\le\|\boldsymbol{\beta}\|_1$ implies a positive lower bound $c$ for $\sigma(a)\sigma(-a)$.
Thus $\nabla^2R\succeq c\,\mathbb E[\mathbf d\mathbf d^\top]\succ0$, so $R$ is strictly convex.
The minimizer on $K\setminus\{\mathbf 0\}$ also minimizes over $K$, because continuity and $t\boldsymbol{\beta}^\circ\in K\setminus\{\mathbf 0\}$ for $t>0$ give
$R(\mathbf 0)=\lim_{t\downarrow0}R(t\boldsymbol{\beta}^\circ)\ge R(\boldsymbol{\beta}^\circ)$.
Strict convexity therefore gives a unique minimizer on $K$.

\emph{Uniform convergence on compact sets.}
For fixed $\boldsymbol{\beta}$, the loss is bounded above by $\log2+\|\boldsymbol{\beta}\|_1$.
The strong law gives $R_n\to R$ almost surely on a countable dense subset of $\mathbb R^L$.
Moreover, the gradient bound $\|\nabla\ell\|_2\le\sqrt L$ makes both $R_n$ and $R$ uniformly Lipschitz.
A finite-net argument therefore extends convergence to uniform convergence on each compact set.
This is also a consequence of the convergence theorem for finite convex functions.

\emph{Concentration.}
Fix $\varepsilon>0$ and define
\[
\begin{aligned}
B_\varepsilon&=\{\boldsymbol{\beta}\in K:
\|\boldsymbol{\beta}-\boldsymbol{\beta}^\circ\|_2<\varepsilon\},\\
\Sigma_\varepsilon&=\{\boldsymbol{\beta}\in K:
\|\boldsymbol{\beta}-\boldsymbol{\beta}^\circ\|_2=\varepsilon\}.
\end{aligned}
\]
Continuity, uniqueness, and compactness of $\Sigma_\varepsilon$ imply
\[
\eta=\min_{\boldsymbol{\beta}\in\Sigma_\varepsilon}
\bigl(R(\boldsymbol{\beta})-R(\boldsymbol{\beta}^\circ)\bigr)>0.
\]
Choose $0<\delta<\min(\varepsilon,\|\boldsymbol{\beta}^\circ\|_2/2)$ such that
$R(\boldsymbol{\beta})<R(\boldsymbol{\beta}^\circ)+\eta/8$ on $B_\delta$.
Almost surely, uniform convergence gives $|R_n-R|<\eta/8$ on $\overline B_\varepsilon$ for all sufficiently large $n$.
Consequently,
\[
\begin{aligned}
R_n(\boldsymbol{\beta})&\ge R_n(\boldsymbol{\beta}^\circ)+3\eta/4
&&\text{on }\Sigma_\varepsilon,\\
R_n(\boldsymbol{\beta})&<R_n(\boldsymbol{\beta}^\circ)+3\eta/8
&&\text{on }B_\delta.
\end{aligned}
\]
For $\boldsymbol{\beta}\in K\setminus B_\varepsilon$, the segment from $\boldsymbol{\beta}^\circ$ to $\boldsymbol{\beta}$ meets $\Sigma_\varepsilon$ at
$\mathbf b=(1-a)\boldsymbol{\beta}^\circ+a\boldsymbol{\beta}$ for some $a\in(0,1]$.
Convexity extends the first bound to the entire exterior,
\[
R_n(\boldsymbol{\beta})-R_n(\boldsymbol{\beta}^\circ)
\ge\frac{R_n(\mathbf b)-R_n(\boldsymbol{\beta}^\circ)}{a}
\ge3\eta/4.
\]
Let $\Pi$ be the induced probability prior on $K\setminus\{\mathbf 0\}$, extended to $K$ with zero mass at the origin.
The continuous bijection in \Cref{eq:app-beta-inverse} preserves full support, so $\Pi(B_\delta)>0$.
The posterior mass outside $B_\varepsilon$ satisfies
\[
\begin{aligned}
\Pi(K\setminus B_\varepsilon\mid\mathcal D)
&=\frac{\int_{K\setminus B_\varepsilon}e^{-nR_n}\,\mathrm d\Pi}
{\int_K e^{-nR_n}\,\mathrm d\Pi}\\
&\le\frac{e^{-n(R_n(\boldsymbol{\beta}^\circ)+3\eta/4)}}
{\Pi(B_\delta)e^{-n(R_n(\boldsymbol{\beta}^\circ)+3\eta/8)}}
=\frac{e^{-3n\eta/8}}{\Pi(B_\delta)}\longrightarrow0.
\end{aligned}
\]
Continuity of the inverse parameterization gives posterior concentration at $(\mathbf w^\circ,\tau^\circ)$.
Since $\mathbf w$ is bounded on the simplex, its posterior mean converges to $\mathbf w^\circ$ as well.

\emph{Identification of the target.}
Let $p_\star(\mathbf d)=\Pr(z=+1\mid\mathbf d)$ be the persona's choice probability and $q_{\boldsymbol{\beta}}(\mathbf d)=\sigma(\boldsymbol{\beta}^\top\mathbf d)$.
Expected log-loss decomposes as
\[
R(\boldsymbol{\beta})
=\mathbb E_{\mathbf d}\!\left[
\mathrm{KL}\bigl(\mathrm{Bern}(p_\star(\mathbf d))
\,\|\,\mathrm{Bern}(q_{\boldsymbol{\beta}}(\mathbf d))\bigr)
\right]+C,
\]
where $C$ is the expected conditional entropy, independent of $\boldsymbol{\beta}$.
The minimizer therefore gives the closest choice model in expected Kullback--Leibler divergence, consistent with the interpretation of posterior targets under misspecification.
Under Assumption~\ref{ass:bt}, $p_\star(\mathbf d)=q_{\boldsymbol{\beta}^\star}(\mathbf d)$ for $\boldsymbol{\beta}^\star=\mathbf w^\star/\tau^\star$.
The divergence vanishes at $\boldsymbol{\beta}^\star$, and uniqueness identifies the minimizer with the generating parameters.
\end{proof}

\paragraph{Interpretation and sampling scope.}
When the preference model is misspecified, $(\mathbf w^\circ,\tau^\circ)$ is defined relative to the distribution of reward contrasts supplied to inference.
The elicitation pool therefore affects the population target as well as the information available about it.
The proposition describes an idealized sequence of independent comparisons.
Comparisons drawn from a finite shared pool may share inputs or outputs, so this result does not directly establish consistency for that experimental sampling protocol or quantify error at the reported feedback budgets.

\subsection{Connection to Model Optimality}\label{app:correspondence}
BPM sets the merge coefficients to the inferred posterior mean.
Rewarded Soups motivates this direct assignment as an empirical strategy, while its Pareto-coverage argument concerns the existence of a suitable interpolation~\citep[Section~2.2.3]{rame2023rewardedsoups}.
Assumption~\ref{ass:coord} states the additional correspondence used here to interpret the limiting BPM model as optimal within the merge family.

Under Assumption~\ref{ass:bt} and the conditions of Proposition~\ref{prop:target},
\[
\boldsymbol{\lambda}=\widehat{\mathbf w}
\xrightarrow{\mathrm{a.s.}}\mathbf w^\star,
\qquad
\boldsymbol{\theta}(\boldsymbol{\lambda})
\xrightarrow{\mathrm{a.s.}}\boldsymbol{\theta}(\mathbf w^\star).
\]
The second convergence follows because the expert library is fixed and \Cref{eq:family} is continuous in the merge coefficients.
For LoRA, the effective matrices in \Cref{eq:app-factor-merge} are continuous in those coefficients as well.
By Assumption~\ref{ass:coord}, the limiting model maximizes
\[
\mathbb E_x\mathbb E_{y\sim\pi_{\boldsymbol{\theta}(\boldsymbol{\lambda})}(\cdot\mid x)}
\left[\mathbf w^{\star\top}\tilde{\mathbf r}(x,y)\right]
\]
over $\boldsymbol{\lambda}\in\Delta^{L-1}$.
Thus, accurate preference inference and the correspondence assumption together yield convergence in model parameters to a maximizer of the persona's expected weighted reward.
The conclusion concerns the limit as feedback increases and gives no finite-sample optimality guarantee.
BPM still constructs a model when the correspondence does not hold, and \S\ref{sec:analysis} examines the behavior of direct weighting empirically.

\section{Experimental Details}\label{app:setup}

\subsection{Data Roles and Separation}\label{app:data}
Table~\ref{tab:splits} lists the data assigned to each experimental role.
For MIMIC-CXR, the expert-training, calibration, and remaining partitions are patient-disjoint, with the last partition divided into separate reports for pool simulation, reward standardization, and testing.
We further exclude $49$ test reports whose patients occur in pool simulation, leaving $451$ reports from $407$ patients.
COCO separates the roles by image, and WritingPrompts by cleaned prompt text; transfer analyses use additional inputs without refitting the posterior.

Prior calibration uses reward contrasts without persona labels (Appendix~\ref{app:prior}).
The reward means and standard deviations in Eq.~\eqref{eq:contrast} are estimated from separate standardization inputs, responses from merge-family models on the standardization inputs.

\begin{table}[t]
\centering\small
\setlength{\tabcolsep}{5pt}
\begin{tabular}{lrrr}
\toprule
Data role & \textsc{Summarization} & \textsc{Image Captioning} & \textsc{Story Generation}\\
\midrule
Expert-training pool & 85.6k reports & 600 images & 1,000 prompts\\
Prior calibration & 1,000 patients & 1,000 images & 1,000 prompts\\
Pool simulation & 250 reports & 250 images & 250 prompts\\
Reward standardization & 250 reports & 250 images & 250 prompts\\
Test & 451 reports & 500 images & 500 prompts\\
\bottomrule
\end{tabular}
\caption{Data splits. Expert-training inputs are sampled from the available data shown here, with training sample counts given in Table~\ref{tab:training-config}.}
\label{tab:splits}
\end{table}

\subsection{Reward Dimensions and Expert Libraries}\label{app:rewards}
\paragraph{\textsc{Summarization} ($29$ experts).}
The rewards comprise ten text-similarity metrics (BLEU-1--4, ROUGE-1/2/L, METEOR, chrF, and BERTScore); nine clinical metrics (RadGraph F1 and its anatomy, observation, and relation components; CheXbert precision, recall, and F1; RaTEScore; and GREEN~\citep{ostmeier2024green}); five general summarization metrics (SummaC, AlignScore, BLEURT, MoverScore, and BARTScore); and five form-related metrics (compression, coverage, density, distinct-$n$, and novel-$n$-gram scores).
The form-related rewards measure agreement with the reference impression's corresponding properties.

\paragraph{\textsc{Image Captioning} ($21$ experts).}
The rewards comprise BLEU-1--4, ROUGE-L, METEOR, and CIDEr; five SPICE components (object, attribute, relation, color, and count); CLIPScore and RefCLIPScore; overall SPICE precision, recall, and F1; object precision, recall, and F1 against COCO instance annotations using the CHAIR vocabulary~\citep{rohrbach2018chair}; and Flesch--Kincaid grade.

\paragraph{\textsc{Story Generation} ($16$ experts).}
Five rewards measure length, distinct bigrams, negative self-BLEU, Flesch--Kincaid grade, and negative repetition rate directly from the text.
Six HANNA criteria~\citep{chhun2022hanna} measure relevance, coherence, empathy, surprise, engagement, and complexity.
Five further style criteria measure imagery, prose elegance, showing rather than telling, avoidance of purple prose, and dialogue quality.
The eleven rubric rewards use gpt-oss-20B with the four-level anchors in Appendix~\ref{app:rubric-anchors}, reversing negatively oriented criteria so that higher rewards are better.
The persona-based judges instead use Qwen3.5-27B for feedback and test evaluation.

\subsection{Expert Training and Compute}\label{app:training}
\paragraph{LoRA configuration.}
All experts use standard LoRA with rank $r=16$, scaling parameter $\alpha=32$, and zero dropout, keeping base weights and bias terms frozen.

\paragraph{GRPO and RLOO.}
The GRPO configuration in Table~\ref{tab:training-config} uses batch-level reward scaling with KL coefficient $0$.
\textsc{Story Generation} additionally masks truncated completions and uses token-level truncated importance sampling for the rollout-policy correction.
The \textsc{Summarization} RLOO library uses the same learning rate, training steps, and rollout budget, with leave-one-out advantage centering and batch-level normalization.

\paragraph{DPO experts.}
The additional \textsc{Summarization} library uses reward-labeled pairs obtained by selecting the highest- and lowest-scoring responses among eight base-policy samples per input, excluding inputs without two scored responses or a positive score difference.
The DPO configuration uses one epoch, learning rate $3\times10^{-5}$, $\beta=0.1$, and an effective batch of $32$ pairs, with the frozen base model as the reference.

\begin{table}[t]
\centering\small
\setlength{\tabcolsep}{5pt}
\begin{tabular}{lrrr}
\toprule
Setting & \textsc{Summarization} & \textsc{Image Captioning} & \textsc{Story Generation}\\
\midrule
Training inputs per expert & 1,600 & 600 & 1,000\\
Optimizer steps & 100 & 100 & 100\\
Learning rate & $3\times10^{-5}$ & $3\times10^{-5}$ & $3\times10^{-5}$\\
Completions per prompt & 8 & 8 & 8\\
Completions per optimizer step & 128 & 128 & 128\\
Sampling temperature & 1.0 & 1.0 & 0.7\\
Top-$p$ & 1.0 & 1.0 & 0.95\\
Maximum completion tokens & 1,024 & 128 & 2,048\\
\bottomrule
\end{tabular}
\caption{Training configuration for the main GRPO expert libraries.}
\label{tab:training-config}
\end{table}

\paragraph{Compute resources.}
Policy training uses NVIDIA A100 80GB GPUs, while posterior inference runs on CPUs.

\subsection{Pool Construction and Feedback Simulation}\label{app:pool-construction}
The $20$ random merges used to construct pool $\mathcal C$ have coefficients sampled from $\operatorname{Dir}(\mathbf 1)$.
For each input, we generate one response per model and sample four pairs, each comprising responses from distinct models.
These pairs are shared across personas, and both responses are scored on all reward dimensions.

\paragraph{Preference assignment and filtering.}
For prompt-based personas, both presentation orders must yield valid, non-tied verdicts identifying the same preferred response.
Rubric-based comparisons exclude ties and responses with missing required scores (Appendix~\ref{app:forma}).
Feedback subsets are sampled after this persona-specific filtering, so $n$ counts the comparisons supplied to each method.

\paragraph{Generation and judgment settings.}
\textsc{Summarization} and \textsc{Image Captioning} decode greedily; \textsc{Story Generation} samples at temperature $0.7$ and top-$p$ $0.95$ with a $2{,}048$-token completion budget.
For prompt-based personas, the judging model uses temperature $0$, disabled thinking, and a maximum of $2{,}048$ generated tokens.

\paragraph{Retained feedback and elicitation cost.}
The reported budget $n$ is the number of retained pairwise comparisons used for fitting, sampled from a preconstructed filtered pool.
For a prompt-based persona, let $M$ be the number of candidate pairs judged, $R$ the number retained after filtering, and $J$ the number of order-specific judgments, including retries if any.
The pool acceptance rate is $R/M$, and its average elicitation cost is $J/R$ judgments per retained comparison.
With exactly two judgments per pair and no retries, this cost is $2M/R$.
These are pool-construction costs: the retained-budget sweep reuses the pool and does not measure an online user-query budget.

\subsection{Compared Method Configurations}\label{app:baselines}
\paragraph{Ridge merge and reranking.}
\textsc{rm} fits logistic regression to standardized reward contrasts before the
per-pair $\ell_\infty$ normalization used by \bpm, with an $\ell_2$
penalty of one, and applies softmax to the fitted coefficients for merging.

\paragraph{Per-persona DPO.}
Using the same preference subsets as \textsc{bpm}, \textsc{dpo} trains a fresh adapter with the task-specific LoRA configuration, $40$ optimizer steps, learning rate $3\times10^{-5}$, $\beta=0.1$.
It uses a microbatch of four pairs with eight gradient-accumulation steps and the frozen base model as reference.

\paragraph{\textsc{icai}.}
The constitution is inferred by gpt-oss-20b and is shared between \textsc{icai-b} and \textsc{icai-m}.

\paragraph{Language-to-Weights.}
At temperature $0$, gpt-oss-20b receives the task description, persona prompt, and a one-line definition of each reward, then allocates $100$ points across reward dimensions.
These weights are normalized and used as merge coefficients.

\subsection{Posterior Inference and Randomization}\label{app:seeds}
\paragraph{Posterior inference.}\label{app:inference}
We use NUTS with four chains, each having $1{,}000$ warmup iterations and $2{,}000$ retained samples, requiring split $\widehat R\le1.01$ and a sample size of at least $400$ for each weight.

\paragraph{Training seeds and feedback subsets.}
The \textsc{Summarization} robustness analysis includes GRPO expert libraries trained with two additional seeds.
For each of three feedback repetitions indexed by $d$, we shuffle each persona's filtered pool using seed $1000d$ and take the first $n$ comparisons.
This produces nested subsets across budgets while keeping the expert library fixed.
Posterior sampling uses seed $d$, with seed $d+1000$ for a single retry if convergence criteria are not met; the same criteria apply to the retry.
Unless otherwise specified, main-text results are averaged over three sampled feedback subsets.

\subsection{Test Metrics and Statistical Aggregation}\label{app:ci}
\paragraph{Win rates and aggregation.}
For a fixed task, comparison method, and feedback budget, let $\mathrm{WR}_{p,d}$ be the decided test win rate for persona $p$ and feedback subset $d$, excluding ties, order-inconsistent verdicts, and invalid or missing judgments.
Models fitted to different feedback subsets are evaluated on the same test inputs, and the macro win rate is
\begin{equation}
\mathrm{WR}_{\mathrm{macro}}
=\frac{1}{P}\sum_{p=1}^{P}\left(\frac{1}{D}\sum_{d=0}^{D-1}\mathrm{WR}_{p,d}\right),
\label{eq:setup-macro}
\end{equation}
where $P$ is the number of personas and $D$ the number of feedback subsets included in the comparison.
Personas receive equal weight, while each win rate is conditional on a valid, decided judgment.

All results in Table~\ref{tab:main} use $D=3$.

\paragraph{95\% Confidence intervals.}
We recompute \Cref{eq:setup-macro} over $2{,}000$ bootstrap samples of test inputs, resampled with replacement using shared indices across personas and feedback subsets.
The $2.5$th and $97.5$th percentiles give the $95\%$ confidence interval, quantifying test-input uncertainty conditional on the fixed feedback subsets and expert library.
These intervals quantify test-input uncertainty for the observed personas and fitted models.
They do not include variation from newly sampled feedback, independently trained expert libraries, or alternative persona and judge specifications.
Feedback repetitions may overlap because they sample the same finite filtered pool; budgets are nested within each repetition.

\paragraph{Accounting for undecided judgments.}
We additionally report wins, losses, ties, order-inconsistent verdicts, and invalid judgments, together with a supplementary score that retains valid undecided pairs.
Because the separate presentation-order decisions were not retained, the score assigns a central value to order-inconsistent pairs and reports bounds for their possible contributions.
The construction and results appear in Appendix~\ref{app:evaluation-accounting}.

\section{Persona Specifications}\label{app:personas}
The following descriptions and comparison instructions define the six, six, and seven main personas for \textsc{Summarization}, \textsc{Image Captioning}, and \textsc{Story Generation}, respectively, and remain fixed between pool simulation and test evaluation.
The additional captioning persona \emph{Annotator} is used only in diagnostic analyses and is excluded from the main macro averages.


\subsection{Summarization Prompts}\label{app:prompts-mimic}

\paragraph{Common instruction.}

\begin{quote}\small
The IMPRESSION section is a summary written from the FINDINGS section. You are judging two candidate impressions produced from the same findings. The persona description below defines the intended use and the trade-offs you should apply. Apply those criteria consistently regardless of whether a candidate is labelled A or B.
\end{quote}

\paragraph{Default instruction.}

\begin{quote}\small
The IMPRESSION section of a radiology report is a summary of the FINDINGS section: it must convey the clinically important content of the findings without restating them in full. You are a radiologist judging two candidate summaries of the same findings.
\end{quote}

\paragraph{Comparison template.}

\begin{quote}\small
FINDINGS:
\{source\}

SUMMARY A:
\{a\}

SUMMARY B:
\{b\}

Which is the better summary of these findings under your stated criteria? Choose TIE only if the two are effectively equivalent under those criteria; otherwise choose one even if the difference is small. Do not restate the candidate summaries in your reasoning. Give at most two sentences of reasoning, then end with a line of exactly this form:
VERDICT: A     (or)     VERDICT: B     (or)     VERDICT: TIE
\end{quote}

\paragraph{Ward.}

\begin{quote}\small
You are the ward clinician who will act on this impression. Check each summary against the findings for what it omits: new or changed findings, support devices and their position, and relevant negatives that affect inpatient management. Prefer the summary that omits less of that content; missing information is worse than extra length. Reject any diagnosis the findings do not support, however plausible. When both cover the management-relevant content, brevity is welcome.
\end{quote}

\paragraph{ED.}

\begin{quote}\small
You are the emergency clinician reading this impression during a busy shift. You want the single most urgent, actionable conclusion stated first and as briefly as possible. Whenever both summaries state the urgent conclusion, prefer the shorter one; every extra sentence costs you reading time. A summary that buries or omits the urgent finding is seriously flawed, as is one that asserts an unsupported diagnosis. Detail that would not change acute management is a cost rather than a benefit.
\end{quote}

\paragraph{Complete.}

\begin{quote}\small
You judge a summary by how much of the report's clinically relevant content it retains, including negative and incidental findings. A summary that drops a relevant finding has failed, even if it is short and reads well. Length is not a fault.
\end{quote}

\paragraph{Concise.}

\begin{quote}\small
You judge a summary by how concisely it states the key diagnosis. A summary that restates findings or adds anything beyond the main conclusion has not summarised, however accurate it is. Omitting minor findings is not a fault.
\end{quote}

\paragraph{Faithful.}

\begin{quote}\small
You judge a summary by whether every statement in it is supported by the findings. Any claim, diagnosis or inference that the findings do not state is a failure, however plausible. Length and completeness are not what you judge.
\end{quote}

\paragraph{Calibrated.}

\begin{quote}\small
You judge a summary by whether it preserves the certainty level of the findings: possible or probable findings must stay qualified, definite ones stated as such. Overstated certainty is a failure. Length and completeness are not what you judge.
\end{quote}

\subsection{Image Captioning Prompts}\label{app:prompts-coco}

\paragraph{Common instruction.}

\begin{quote}\small
You are judging two one-sentence captions written for the same photograph, which you can see. The persona description below defines the intended use and the trade-offs you should apply. Apply those criteria consistently regardless of whether a candidate is labelled A or B.
\end{quote}

\paragraph{Default instruction.}

\begin{quote}\small
A caption is a single-sentence description of an image, written for someone who cannot see it. You are looking at the image and judging two candidate captions of it.
\end{quote}

\paragraph{Comparison template.}

\begin{quote}\small
CAPTION A:
\{a\}

CAPTION B:
\{b\}

Which is the better caption for this image under your stated criteria? Choose TIE only if the two are effectively equivalent under those criteria; otherwise choose one even if the difference is small. Do not restate the candidate captions in your reasoning. Give at most two sentences of reasoning, then end with a line of exactly this form:
VERDICT: A     (or)     VERDICT: B     (or)     VERDICT: TIE
\end{quote}

\paragraph{Default.}

No additional persona description is supplied; the default instruction above is used.

\paragraph{Indexer.}

\begin{quote}\small
You catalogue images for visual search. You prefer the caption that specifically names more of the objects that are visible. When attributes, colours, and counts are clear, you want them included. You also value the setting when it would help someone retrieve the image. You treat every omission of something clearly visible as a fault, but you do not penalize a caption for being long.
\end{quote}

\paragraph{Magazine Editor.}

\begin{quote}\small
You edit a photography magazine and appreciate fine prose. You prefer a caption that forms a well-crafted sentence with vivid, precise, and varied vocabulary. You value an evocative sense of light, texture, and atmosphere, along with graceful rhythm. You treat plain, inventory-like wording as a fault. You welcome length when every word earns its place. You still require the caption to describe this image rather than a different one.
\end{quote}

\paragraph{Verifier.}

\begin{quote}\small
You edit captions before publication and prefer the caption whose every statement is directly supported by what you can see. You treat claims about people's feelings or intentions, the scene's location, events before or after the pictured moment, and anything outside the frame as faults, however plausible they seem. You also penalize any object, colour, or count that is not shown. You do not judge coverage, length, or style.
\end{quote}

\paragraph{Picture Book.}

\begin{quote}\small
You choose captions for a picture book aimed at young children. You prefer one complete sentence that a six-year-old can read aloud and understand. It uses common everyday words, stays short and concrete, and says what is happening. You treat rare or long words, sub-clauses, lists, and abstract language as faults. You do not penalize a caption for leaving out minor details in the background.
\end{quote}

\paragraph{Alt-text.}

\begin{quote}\small
You write alt text for someone using a screen reader. You prefer one short, plain sentence that identifies the main subject, describes its most important visible action, and includes only the context needed to understand the image. You penalize speculation, decorative language, and background details that do not change the meaning. You consider a caption unsuccessful if it omits the main subject or its action.
\end{quote}

\paragraph{Annotator (additional diagnostic persona).}
Excluded from the main macro averages.
\begin{quote}\small
You annotate images for a COCO-style captioning dataset and follow its annotation guidelines. You prefer one plain sentence of roughly eight to fifteen words that names the important objects, says what they are doing, uses ordinary words, and avoids proper names. You treat guesses about the past, the future, feelings, or anything outside the frame as faults, along with decorative wording and minor background details. You do not penalize a longer or rarer word when it is the accurate one.
\end{quote}

\subsection{Story Generation Prompts}\label{app:prompts-story}

\paragraph{Common instruction.}

\begin{quote}\small
You are shown a writing prompt and two stories written in response to it. The persona description below defines the role you read in and the trade-offs you should apply. Apply those criteria consistently regardless of whether a story is labelled A or B.
\end{quote}

\paragraph{Default instruction.}

\begin{quote}\small
You are shown a writing prompt and two stories written in response to it.
\end{quote}

\paragraph{Comparison template.}

\begin{quote}\small
WRITING PROMPT:
\{source\}

STORY A:
\{a\}

STORY B:
\{b\}

Which story would you, in this role, rather publish or read? Choose TIE only if the two are effectively equivalent under your stated criteria; otherwise choose one even if the difference is small. Do not restate the stories in your reasoning. Give at most two sentences of reasoning, then end with a line of exactly this form:
VERDICT: A     (or)     VERDICT: B     (or)     VERDICT: TIE
\end{quote}

\paragraph{Literary.}

\begin{quote}\small
You are the fiction editor of a literary magazine. You value restraint and precision: concrete images that do the emotional work, characters revealed through what they do and notice rather than through statements about how they feel, and endings that trust the reader. Clichés, melodrama, decoration without purpose, and sentences that explain what the story has already shown all count against a story. Extra length earns no credit by itself, and you would rather publish a restrained, emotionally suggestive story with little action than a brisk, neatly resolved one whose prose is less precise.
\end{quote}

\paragraph{Genre.}

\begin{quote}\small
You are an acquiring editor for a commercial genre imprint (thriller, fantasy, science fiction, romance). You want a story that hooks in the first lines, makes the central problem clear, moves through consequential action and dialogue, and resolves that problem with an earned payoff. You prefer forward action and a clear resolution to sustained interior reflection and an ending left open to interpretation. You would rather acquire a brisk, clearly resolved story in plain, serviceable prose than a beautifully phrased, reflective one with little forward action.
\end{quote}

\paragraph{Librarian.}

\begin{quote}\small
You are a children's librarian choosing stories to read aloud to six- to nine-year-olds. You want clear, mostly familiar language that reads naturally aloud, sympathetic characters a child can root for, and a warm, reassuring ending. Frightening or adult material and persistently difficult wording count strongly against a story, while brief, understandable moments of danger or sadness can work within a reassuring whole.
\end{quote}

\paragraph{Flash.}

\begin{quote}\small
You judge flash fiction and prefer compact stories, usually around 300 words or fewer. You favour a tightly focused situation, a memorable central image, and an ending that changes how the reader sees what came before. You value compression, but a slightly longer story can win when its extra words make the image or the final turn more effective.
\end{quote}

\paragraph{Screenwriter.}

\begin{quote}\small
You are a screenwriter who enjoys short fiction built around dramatic scenes. You favour concrete scenes where distinctive dialogue and visible action reveal conflict and character, with thoughts and backstory kept brief. You prefer conflict enacted between characters to extended interior monologue or descriptive passages that interrupt a scene.
\end{quote}

\paragraph{Restorative.}

\begin{quote}\small
You read fiction from a restorative perspective on wrongdoing. When a story deals with harm, you favour one that treats accountability as repair — the wrongdoer understanding what they did, the harmed person having a say, some restoration attempted, however partial — over one that resolves through punishment, revenge, or the wrongdoer simply getting what was coming to them. This holds even when the punitive story is equally coherent, equally moving, and equally well written. Where a story involves no wrongdoing, judge it on whether it takes its human situation seriously.
\end{quote}

\paragraph{Default.}

No additional persona description is supplied; the default instruction above is used.

\section{Rubric-Based Personas}\label{app:forma}
Table~\ref{tab:rubric-personas} defines the five rubric-based \textsc{Story Generation} personas.
Qwen3.5-27B scores the eleven criteria through AutoRubric~\citep{rao2026autorubric} with temperature $0$ and thinking disabled, using the anchors below to map each criterion to $\{0,1/3,2/3,1\}$ with higher scores preferred.
Brevity and readability use word-count thresholds of $300$, $450$, and $600$ and Flesch--Kincaid thresholds of $4$, $6$, and $8$, respectively, assigning higher scores to shorter and more readable text.

\begin{table}[t]
\centering\small
\begin{tabular}{lp{0.72\textwidth}}
\toprule
Persona & Nonzero criterion weights\\
\midrule
\emph{Literary} & Prose elegance $0.30$; showing rather than telling $0.25$; imagery $0.25$; avoidance of purple prose $0.20$.\\
\emph{Genre} & Engagement $0.35$; surprise $0.25$; dialogue quality $0.20$; coherence $0.20$.\\
\emph{Librarian} & Readability $0.35$; coherence $0.25$; empathy $0.25$; avoidance of purple prose $0.15$.\\
\emph{Flash} & Brevity $0.35$; surprise $0.35$; imagery $0.30$.\\
\emph{Screenwriter} & Dialogue quality $0.45$; showing rather than telling $0.35$; imagery $0.20$.\\
\bottomrule
\end{tabular}
\caption{Weights defining the five rubric-based \textsc{Story Generation} personas. Unlisted criteria have zero weight.}
\label{tab:rubric-personas}
\end{table}

A persona prefers the response with the larger weighted score over its nonzero-weight criteria; pairs with ties or missing required scores are excluded without renormalizing the remaining weights.
The controlled preference sweep~\Cref{fig:analysis}b interpolates the \emph{Literary} and \emph{Genre} weight vectors at $t\in\{0.17,0.33,0.50,0.67,0.83\}$, using $(1-t)\mathbf h_{\mathrm{Literary}}+t\mathbf h_{\mathrm{Genre}}$ to generate feedback.
The endpoint rubrics remain fixed when evaluating the resulting stories.

\subsection{Criterion Definitions and Score Anchors}\label{app:rubric-anchors}

The following four-level anchors specify the story-quality criteria. For positively oriented criteria, reward is the level divided by three. For fault criteria, reward is one minus the level divided by three. The same definitions and anchors are supplied to the reward scorer and the persona criterion scorer.

\paragraph{relevance.}

How well the story matches the writing prompt.

\begin{enumerate}[label=\arabic*.,start=0,leftmargin=*,itemsep=0pt]

\item The story ignores the prompt: none of the prompt's premise, characters or situation appears in it.

\item The story touches the prompt only loosely, picking up a word or a mood but not the situation the prompt sets up.

\item The story clearly works from the prompt but drops or contradicts part of its premise.

\item The story takes on the prompt's whole premise and stays inside it from beginning to end.

\end{enumerate}

\paragraph{coherence.}

How much the story makes sense.

\begin{enumerate}[label=\arabic*.,start=0,leftmargin=*,itemsep=0pt]

\item The story does not make sense: sentences do not connect, or characters and events change without explanation.

\item Individual passages read, but the story as a whole is confusing, with events or characters that contradict each other.

\item The story mostly follows, with one or two jumps, loose ends or inconsistencies a reader has to work around.

\item Every event and character follows from what came before; nothing in the story has to be explained away.

\end{enumerate}

\paragraph{empathy.}

How well the reader understood the characters' emotions.

\begin{enumerate}[label=\arabic*.,start=0,leftmargin=*,itemsep=0pt]

\item The characters have no discernible emotional life; there is nothing for the reader to understand.

\item Emotions are only asserted or named, so the reader is told how a character feels without being able to feel it.

\item The reader understands what the characters feel at the story's main turns, though not consistently.

\item The reader follows the characters' feelings throughout, grounded in what the characters do, say and notice.

\end{enumerate}

\paragraph{surprise.}

How surprising the end of the story was.

\begin{enumerate}[label=\arabic*.,start=0,leftmargin=*,itemsep=0pt]

\item The ending is entirely predictable, or the story simply stops without arriving anywhere.

\item The ending is the obvious continuation of the premise, with nothing the reader had not already assumed.

\item The ending contains a turn the reader did not expect, though a mild or familiar one.

\item The ending genuinely surprises and still fits the story, changing how the reader sees what came before.

\end{enumerate}

\paragraph{engagement.}

How much the reader engaged with the story.

\begin{enumerate}[label=\arabic*.,start=0,leftmargin=*,itemsep=0pt]

\item The story gives the reader no reason to continue; reading it is an effort.

\item The story holds attention in places but stalls, repeats itself or loses the reader's interest.

\item The story keeps the reader reading, though attention slackens in parts of it.

\item The story holds the reader from the first lines to the last and makes them want to know what happens.

\end{enumerate}

\paragraph{complexity.}

How elaborate the story is.

\begin{enumerate}[label=\arabic*.,start=0,leftmargin=*,itemsep=0pt]

\item The story is a bare sketch: one flat situation, no development of character, plot or theme.

\item The story has a simple single thread with little development beyond stating what happens.

\item The story develops a plot, a character or an idea with some depth, but along a single dimension.

\item The story is elaborate, developing plot, characters and theme together and holding them in relation.

\end{enumerate}

\paragraph{Imagery \& Descriptive Quality.}

How vividly and concretely the story renders what can be seen, heard, touched, smelled and tasted.

\begin{enumerate}[label=\arabic*.,start=0,leftmargin=*,itemsep=0pt]

\item The story is abstract throughout: nothing in it can be pictured, and no sense beyond sight is engaged even in outline.

\item Description is generic -- rooms, faces and weather named but not rendered, with images that could belong to any story.

\item The story contains some concrete, specific images that a reader can see, though much of it stays at the level of summary.

\item The story is rendered in precise, particular sensory detail that puts the reader in the scene, with images chosen rather than defaulted to.

\end{enumerate}

\paragraph{Elegant Prose.}

How well-made the sentences are: rhythm, control of syntax, word choice, and variation.

\begin{enumerate}[label=\arabic*.,start=0,leftmargin=*,itemsep=0pt]

\item The prose is clumsy: broken or monotonous syntax, repeated sentence shapes, and words that do not mean what the sentence needs.

\item The prose is serviceable but flat -- correct sentences with no rhythm, no variation in length or shape, and predictable diction.

\item The prose is controlled and readable, with some variation of rhythm and some well-chosen words, but also passages that go slack.

\item The prose is consistently well made: sentences vary in shape and rhythm to a purpose, and the diction is precise throughout.

\end{enumerate}

\paragraph{Tell-Don't-Show.}

How much the story states its content outright instead of dramatising it. This is a FAULT: score how badly the story tells rather than shows.

\begin{enumerate}[label=\arabic*.,start=0,leftmargin=*,itemsep=0pt]

\item The story dramatises throughout: what characters feel and what is at stake reaches the reader through action, speech and detail.

\item The story mostly dramatises, with one or two places where it stops to state outright something it has already shown.

\item The story tells often -- feelings, motives and significance are announced to the reader as much as they are enacted.

\item The story is told rather than dramatised: it is a report of what happened and what it meant, with little or nothing enacted on the page.

\end{enumerate}

\paragraph{Purple Prose.}

How overwrought the writing is: ornamental adjectives, strained metaphor, and grandiosity out of proportion to the material. This is a FAULT: score how purple the prose is.

\begin{enumerate}[label=\arabic*.,start=0,leftmargin=*,itemsep=0pt]

\item The writing is unadorned; every image and modifier earns its place.

\item There are occasional ornamental flourishes or an over-reached metaphor, but the prose is not carried away.

\item The writing is frequently overwrought: stacked adjectives, strained figures and a register grander than the material.

\item The writing is relentlessly purple -- ornament substitutes for substance and the prose is exhausting to read.

\end{enumerate}

\paragraph{Weak Dialogue.}

How poorly the spoken lines work: stiff or interchangeable voices, exposition delivered as speech, and dialogue that does no work in the scene. This is a FAULT: score how weak the dialogue is. A story with no dialogue at all cannot be strong on this criterion and is scored 2.

\begin{enumerate}[label=\arabic*.,start=0,leftmargin=*,itemsep=0pt]

\item The dialogue is alive: distinct voices, lines that carry the scene, and nothing said only for the reader's benefit.

\item The dialogue mostly works, with a few stiff lines or a passage where a character explains something for the reader.

\item The dialogue is weak: voices are interchangeable, lines are stilted, or speech is used chiefly to deliver exposition. A story with no dialogue is scored here.

\item The dialogue is unusable -- wooden, uniform and purely expository -- or the story is one where speech was needed and none of it convinces.

\end{enumerate}

\section{Additional Results}\label{app:results}

\subsection{Persona-Level Performance}\label{app:full}
\Cref{fig:heatmap-persona} expands the main comparisons at $n=500$.
Each cell averages three feedback subsets, using the same persona sets and decided-win-rate definition as Table~\ref{tab:main}.

\begin{figure}[tbp]
\centering
\includegraphics[width=\textwidth]{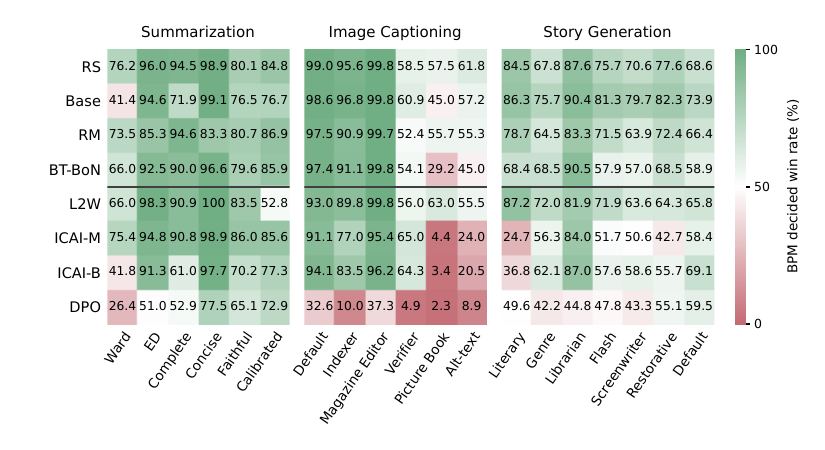}
\caption{\textbf{Persona-level personalization performance at $n=500$.}
Each cell is \bpm's decided win rate (\%) against the row's compared method, averaged over three feedback subsets.
Colors follow Table~\ref{tab:main}.}
\label{fig:heatmap-persona}
\end{figure}

\subsection{Human Feedback and Evaluation}\label{app:human}
One author-annotator supplied \textsc{Image Captioning} feedback under the \textit{Picture Book} and \textit{Magazine Editor} descriptions and evaluated the resulting models on separate test images.
These are instructed preferences from one annotator.
The feedback contained $100$ and $89$ decided comparisons, respectively, sampled from the merge-family pool.
The pairs had valid decided LLM judgments, but only the human choices and reward scores were used to fit one posterior per description.

The annotator evaluated four independently sampled packs of $100$ test images, comparing each model with uniform and comparing the two personalized models under each description.
Model identities were concealed and presentation sides randomized; the annotator knew the research hypothesis.
Byte-identical captions count as ties under a rule fixed before evaluation.
Table~\ref{tab:human} shows that both models outperform uniform and that the preferred model changes with the instructed persona.
This single-annotator study demonstrates feasibility under instructed preferences, rather than independent validation across human users.

\begin{table}[ht]
\centering\small
\caption{\textbf{Human evaluation on Image Captioning.} The win rate refers to the model fitted to the named preference description. Each pack has $100$ images. Intervals are $95\%$ Wilson intervals on decided comparisons.}
\label{tab:human}
\begin{tabular}{llrrrr}
\toprule
Description & Comparator & Wins & Losses & Ties & Win rate (\%)\\
\midrule
Picture Book & Uniform & 44 & 8 & 48 & 84.6 [72.5, 92.0]\\
Magazine Editor & Uniform & 84 & 7 & 9 & 92.3 [85.0, 96.2]\\
Picture Book & Magazine Editor model & 89 & 5 & 6 & 94.7 [88.1, 97.7]\\
Magazine Editor & Picture Book model & 93 & 3 & 4 & 96.9 [91.2, 98.9]\\
\bottomrule
\end{tabular}
\end{table}

\subsection{Length- and Readability-Matched Preference Reversals}\label{app:length}
To examine whether the reversals extend beyond length and readability, we restrict the six persona-pair comparisons to outputs whose word counts differ by at most $10\%$ of their mean length.
The joint match also requires their Flesch--Kincaid grades to differ by at most $1$.
This analysis uses $n=500$ and the first feedback subset, with $2{,}000$ paired input-bootstrap samples.

For \textit{Librarian}--\textit{Literary}, the \textit{Librarian} model wins $54/55$ length-matched comparisons under the \textit{Librarian} judge and $5/49$ under the Literary judge.
With readability also matched, the counts are $33/33$ and $3/34$.
The corresponding between-judge differences are $88$ percentage points with $95\%$ interval $[78,96]$ and $91$ points with interval $[81,100]$.
Matching selects a subset of generated outputs and does not identify a causal effect with length removed.

\subsection{Rubric-Based Preference Following}\label{app:rubric-results}
Table~\ref{tab:rubric-full} reports persona-level results for the rubric definitions in Appendix~\ref{app:forma}.
The macro advantage holds for every compared method at both.

For the sweep in \Cref{fig:analysis}b, we vary the feedback rubric as
$(1-t)\mathbf h_{\mathrm{Literary}}+t\mathbf h_{\mathrm{Genre}}$
for $t\in\{0,0.17,0.33,0.50,0.67,0.83,1\}$ and fit a posterior at each setting using $n=500$ comparisons from each of three feedback subsets.
Both endpoint judges remain fixed during evaluation against uniform.
These rubric weights are hidden from \bpm and need not equal the inferred weights over the expert library's reward dimensions.

\begin{table}[ht]
\centering\footnotesize
\caption{\textbf{Rubric-persona decided win rates.} BPM's win rates (\%) against each comparator. Entries average three feedback subsets and include $95\%$ paired input-bootstrap intervals.}
\label{tab:rubric-full}
\begin{tabular}{llcccc}
\toprule
Persona & $n$ & Uniform & Base Policy & Ridge-Merge & BT-BoN\\
\midrule
\textit{Literary} & 100 & 69.4 [65.9, 72.9] & 80.6 [77.7, 83.5] & 61.6 [58.8, 64.4] & 67.4 [64.5, 70.7]\\
 & 500 & 82.6 [79.7, 85.2] & 90.2 [88.2, 92.2] & 77.6 [74.9, 80.1] & 81.9 [79.0, 84.7]\\
\textit{Genre} & 100 & 65.7 [62.3, 69.0] & 68.0 [64.5, 71.2] & 60.8 [58.1, 63.5] & 64.0 [60.8, 67.1]\\
 & 500 & 68.8 [65.4, 72.1] & 70.8 [67.6, 74.2] & 66.1 [63.6, 68.5] & 65.9 [62.7, 69.0]\\
\textit{Librarian} & 100 & 59.6 [55.8, 63.2] & 49.8 [45.7, 53.5] & 52.3 [49.4, 55.1] & 64.4 [61.2, 67.5]\\
 & 500 & 67.2 [63.7, 70.6] & 57.7 [53.9, 61.5] & 61.1 [58.4, 63.9] & 68.9 [65.6, 72.0]\\
\textit{Flash} & 100 & 66.4 [62.9, 69.8] & 74.5 [71.3, 77.6] & 66.1 [63.8, 68.5] & 55.1 [51.7, 58.3]\\
 & 500 & 72.7 [69.4, 75.7] & 79.1 [76.2, 81.9] & 75.8 [73.5, 78.1] & 59.5 [56.0, 62.8]\\
\textit{Screenwriter} & 100 & 67.6 [63.9, 71.0] & 81.4 [78.3, 84.1] & 61.4 [58.4, 64.1] & 66.1 [62.7, 69.3]\\
 & 500 & 82.4 [79.3, 85.2] & 91.0 [88.9, 92.9] & 76.6 [74.0, 79.0] & 79.1 [75.6, 82.2]\\
\bottomrule
\end{tabular}
\end{table}

\subsection{Using Inferred Weights for Policy Training}\label{app:retrain}
We use \textit{Ward}'s and \textit{ED}'s posterior-mean weights from the first feedback subset at $n=500$ to scalarize the $29$ rewards for GRPO training on $1{,}600$ Summarization prompts.
A third policy uses uniform reward weights under the same training recipe.
Each persona prefers the policy trained with its inferred weights over both uniform-weight training and the other persona's policy (Table~\ref{tab:retrain}).

\begin{table}[ht]
\centering\small
\caption{Retrained policies at inferred weights: decided win rates (\%) under the named persona's judge.
The last column compares each persona's retrained policy with the other persona's retrained policy.}
\label{tab:retrain}
\begin{tabular}{lccc}
\toprule
Judge & vs.\ uniform-weight retrain & vs.\ \bpm merge & vs.\ other persona\\
\midrule
Ward & 98.37 [97.20,99.53] & 59.09 [36.36,77.27] & 100.00 [99.06,100.00]\\
ED & 85.40 [81.37,89.13] & 48.76 [41.79,55.72] & 97.95 [96.59,99.09]\\
\bottomrule
\end{tabular}
\end{table}

\subsection{Alternative Expert Libraries}\label{app:library-robustness}
The RLOO and DPO libraries each contain one Summarization expert per reward, trained without persona feedback.
We retain the feedback protocol and persona descriptions and compare each personalized model with its library's own uniform merge.
Table~\ref{tab:library-personas} shows gains for every persona at both budgets, averaged over three feedback subsets.
Since each library has a different uniform opponent, these results assess personalization within libraries rather than rank their training algorithms.

\begin{table}[tbp]
\centering\footnotesize
\setlength{\tabcolsep}{4pt}
\caption{\textbf{Personalization within alternative expert libraries.} \bpm's decided win rates (\%) against each library's own uniform merge, averaged over three feedback subsets, with $95\%$ input-bootstrap intervals. Each comparison uses its library's opponent.}
\label{tab:library-personas}
\begin{tabular}{lcccc}
\toprule
 & \multicolumn{2}{c}{RLOO experts} & \multicolumn{2}{c}{DPO experts}\\
Persona & $n=100$ & $n=500$ & $n=100$ & $n=500$ \\
\midrule
Ward & 92.4 [89.7, 94.9] & 97.4 [96.0, 98.5] & 74.9 [68.7, 80.6] & 87.2 [83.8, 90.2] \\
ED & 94.8 [93.0, 96.3] & 96.4 [94.8, 97.8] & 86.8 [83.6, 89.9] & 93.4 [90.9, 95.5] \\
Complete & 93.3 [90.8, 95.6] & 99.2 [98.5, 99.8] & 78.5 [72.7, 83.8] & 92.3 [89.5, 94.8] \\
Concise & 96.5 [95.2, 97.7] & 97.9 [96.4, 99.1] & 83.2 [77.8, 87.9] & 92.6 [90.1, 94.7] \\
Faithful & 94.3 [91.9, 96.5] & 98.2 [97.0, 99.3] & 72.7 [65.9, 79.4] & 89.2 [85.4, 92.6] \\
Calibrated & 92.2 [89.5, 94.8] & 97.1 [95.5, 98.5] & 70.3 [64.4, 75.8] & 87.1 [82.5, 91.1] \\
\bottomrule
\end{tabular}
\end{table}

\section{Additional Analysis}\label{app:analysis}

\subsection{Posterior Distributions and Uncertainty}\label{app:uncertainty-performance}
\paragraph{Persona-specific posterior distributions.}
To illustrate what \bpm provides beyond a single coefficient vector, we compare the shared prior with the reward-weight posteriors for the \textit{Literary} and \textit{Librarian} personas on Story Generation at $n=500$.
\Cref{fig:posterior-story} shows six selected marginal distributions.
The Literary posterior assigns greater weight to show-don't-tell, prose elegance, and imagery, whereas the Librarian posterior emphasizes avoiding purple prose and coherence.
The means supply coefficients for constructing the personalized models; the distributions additionally quantify support for alternative values of each weight under the model and prior.
These examples illustrate persona-specific updates and the information retained by the posterior beyond the coefficients used for merging.

\begin{figure}[t]
\centering
\includegraphics[width=\textwidth]{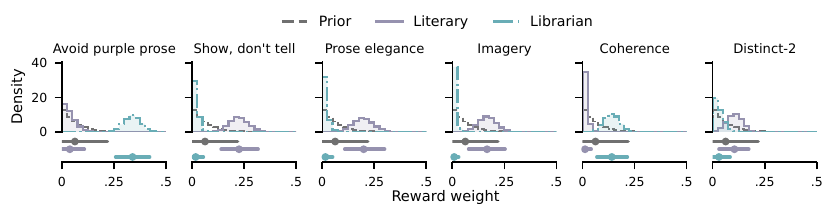}
\caption{\textbf{Persona-specific reward-weight distributions.}
The shared prior and the Literary and Librarian posteriors on Story Generation at $n=500$, for six selected reward dimensions.
Dots and horizontal bars denote means and central $95\%$ intervals.
These are marginal distributions of $w$, rather than intervals for the temperature-scaled coefficients $\beta=w/\tau$.}
\label{fig:posterior-story}
\end{figure}

\paragraph{Uncertainty and personalization performance.}
For each persona and feedback budget, we average posterior uncertainty $U$ and the decided win rate against uniform merging separately over the three feedback subsets.
We compute Spearman correlations between these averages within each task and budget, using six Summarization personas, six Image Captioning personas, and seven Story Generation personas.
The six correlations in \Cref{fig:uncertainty}b therefore describe between-persona variation with task and budget held fixed.
Lower uncertainty is associated with higher preference win rates
in some task--budget setting, providing context for interpreting posterior uncertainty.

\subsection{Coefficient Interval Coverage}\label{app:coverage}
\paragraph{Calibration under the matched model and prior.}
We compare interval procedures on shared simulated feedback with a common target $\boldsymbol\beta^\star=\mathbf w^\star/\tau^\star$.
Generating weights and temperatures follow \bpm's prior, and choices follow the Bradley--Terry likelihood on each task's normalized contrasts.
Coverage and width are averaged over coefficient coordinates and simulations within each task, then equally over tasks.
Table~\ref{tab:def-intervals} characterizes the tested procedures under this generating distribution; it does not establish calibration for observed persona preferences.

\paragraph{The prior contributes substantially at small budgets.}
Prior-only intervals attain near-nominal coverage and are slightly narrower on average than the posterior intervals at $n=10$ and $25$.
Thus, the matched comparison does not demonstrate interval contraction due to feedback.
This does not imply that the posterior equals the prior: posterior location and shape can change without a decrease in average interval width.
The $\mathbf w$ marginals at $n=500$ in Appendix~\ref{app:uncertainty-performance} illustrate persona-specific updates in a separate setting, while the stress tests assess sensitivity to generating and fitted assumptions.

\paragraph{Comparison with regularized estimation.}
We next examine whether regularized point estimation combined with standard interval constructions reproduces this behavior.
We fit RM's logistic estimator to the same normalized contrasts and construct bootstrap and Laplace intervals for its raw coefficient $\beta$.
This coefficient is unconstrained and regularized toward zero, whereas \bpm constrains $\beta=w/\tau$ through simplex weights.
RM bootstrap intervals under-cover despite being wider than \bpm's; Laplace intervals instead over-cover and are much wider (\Cref{tab:def-intervals}).
Thus, the two tested constructions do not reproduce \bpm's interval performance in this simulation, illustrating that regularizing a point estimate alone does not determine the quality of its uncertainty estimates.

\begin{table}[t]
\centering
\small
\caption{\textbf{Coefficient intervals under limited feedback.}
Coverage (\%) and mean width of nominal $90\%$ intervals for
$\beta=w/\tau$, averaged equally over three tasks after averaging
over coefficient coordinates and simulations within each task.
The generating parameters follow \bpm's prior.
MAP, MLE, and RM bootstrap intervals use 200 resamples.
The SS row uses sequential Gaussian inference, distinct from its
signed point estimator used for merging.
Prior-only intervals do not condition on feedback.}
\label{tab:def-intervals}
\setlength{\tabcolsep}{7pt}
\begin{tabular}{lrrrr}
\toprule
 & \multicolumn{2}{c}{$n=10$} & \multicolumn{2}{c}{$n=25$} \\
\cmidrule(lr){2-3}\cmidrule(lr){4-5}
Interval & Coverage $\uparrow$ & Width $\downarrow$ & Coverage $\uparrow$ & Width $\downarrow$ \\
\midrule
\bpm posterior & 88.9 & 0.176 & 89.2 & 0.191 \\
Prior only & 89.8 & 0.171 & 89.9 & 0.171 \\
MAP bootstrap & 8.2 & 0.024 & 20.4 & 0.261 \\
MLE bootstrap & 53.1 & 11.144 & 61.6 & 8.974 \\
SS sequential Gaussian & 99.9 & 3.036 & 99.7 & 2.797 \\
RM bootstrap & 72.8 & 0.599 & 80.0 & 0.913 \\
RM Laplace & 99.9 & 3.052 & 99.7 & 2.812 \\
\bottomrule
\end{tabular}
\end{table}

\paragraph{Sensitivity to modeling assumptions.}
To examine the scope of these results, we vary the generating weights, the fitted prior, and the amount of label noise.
\Cref{tab:def-stress} summarizes 16 settings across three tasks at $n\in\{10,25,100\}$, with 50 replicates per setting, task, and budget.
Coverage of the simplex weights $w$ remains near nominal under the tested label-flip rates, but falls to $70.6$--$77.1\%$ when the fitted Dirichlet concentration is doubled.
For generating weights concentrated on three dimensions, coverage on Story Generation falls to $81.1$--$81.4\%$.
Sensitivity also depends on the coefficient target: halving the standard deviation of the $\log\tau$ prior yields $87.7$--$90.4\%$ coverage for $w$, but $78.6$--$88.6\%$ for $\beta$.
Across the tested settings, mean posterior interval widths for $w$ are approximately $92$--$100\%$ of the corresponding prior widths.
These findings identify both the contribution of the prior and the limits of the resulting uncertainty estimates: \bpm characterizes plausible reward weights conditional on its modeling assumptions, and limited feedback need not overcome prior misspecification.
For sparse generating weights, coordinate-averaged coverage can also conceal missed intervals for the few large weights.
These simulations assess parameter-interval sensitivity.
They do not establish that personalized-model performance is robust to changing the fitted prior in the final expert libraries.

\begin{table}[t]
\centering
\small
\caption{\textbf{Sensitivity of interval coverage to modeling assumptions.}
Each entry is the range of mean nominal $90\%$ coverage (\%) over the nine task--budget cells in a setting, not a confidence interval.
Coverage is averaged over coordinates and simulation replicates within each cell. Separate columns distinguish simplex weights $w$ from temperature-scaled coefficients $\beta=w/\tau$.
Prior-sensitivity and label-noise settings draw generating parameters from the production prior; multipliers modify the fitted prior.}
\label{tab:def-stress}
\label{tab:def-stress-beta}
\setlength{\tabcolsep}{8pt}
\begin{tabular}{lcc}
\toprule
Setting & $w$ coverage range & $\beta$ coverage range \\
\midrule
\multicolumn{3}{l}{\emph{Generating weight configurations}} \\
One weight $0.8$, $\tau^*=0.5$ & 93.6--96.6 & 93.8--96.9 \\
One weight $0.8$, $\tau^*=1$ & 93.8--96.6 & 93.8--96.8 \\
One weight $0.8$, $\tau^*=2$ & 93.8--96.6 & 93.8--96.6 \\
Three weights $0.3$, $\tau^*=0.5$ & 81.2--90.8 & 81.2--91.0 \\
Three weights $0.3$, $\tau^*=1$ & 81.1--89.9 & 82.3--90.8 \\
Three weights $0.3$, $\tau^*=2$ & 81.2--89.7 & 91.2--98.5 \\
Near-uniform, $\tau^*=0.5$ & 100.0 & 84.0--99.7 \\
Near-uniform, $\tau^*=1$ & 100.0 & 100.0 \\
Near-uniform, $\tau^*=2$ & 100.0 & 100.0 \\
\midrule
\multicolumn{3}{l}{\emph{Fitted prior specification}} \\
Dirichlet concentration $\times0.5$ & 92.4--98.5 & 91.5--96.9 \\
Dirichlet concentration $\times1$ & 87.2--91.0 & 88.0--91.6 \\
Dirichlet concentration $\times2$ & 70.6--77.1 & 71.6--83.9 \\
$\log\tau$ prior sd $\times0.5$ & 87.7--90.4 & 78.6--88.6 \\
$\log\tau$ prior sd $\times2$ & 86.6--91.2 & 87.5--97.1 \\
\midrule
\multicolumn{3}{l}{\emph{Feedback noise}} \\
$5\%$ label flips & 88.3--91.0 & 87.9--91.7 \\
$15\%$ label flips & 87.2--91.0 & 85.6--89.9 \\
\bottomrule
\end{tabular}
\end{table}

\paragraph{Inference and simulation details.}
The matched comparison uses 100 simulations per task and budget, with $w\sim\operatorname{Dir}(s\mathbf{1})$ and $\log\tau\sim\mathcal{N}(\log 2,1)$.
\bpm intervals are central posterior intervals; MAP and MLE intervals are percentile intervals from 200 bootstrap refits.
Failed point-estimate fits are retried and then discarded; simulations without surviving refits count as noncoverage.
SS uses the sequential Gaussian update of Chow et al.\ (2026), initialized with $\mathcal{N}(0,I)$ and without the signed point estimator's $\ell_1$ constraint; marginal intervals are the Gaussian mean plus or minus $1.645$ marginal standard
deviations. The iterated-update variant gives similar results.
Prior intervals use $400{,}000$ Monte Carlo draws.
RM uses no intercept and a fixed $\ell_2$ penalty of one, with the
same 200 bootstrap resamples as MAP and MLE.
Its Laplace intervals are $\hat\beta_j\pm1.645\sqrt{(H^{-1})_{jj}}$, where $H$ is the Hessian of the penalized objective.
For this comparison, RM is refitted on the shared $\ell_\infty$-normalized contrasts, rather than the pre-normalization standardized contrasts used in its model-personalization pipeline; its intervals concern raw logistic coefficients, not softmax weights.

In the stress tests, the mass remaining after assigning one weight of $0.8$ or three weights of $0.3$ is shared equally among the other dimensions; near-uniform weights are drawn from $\operatorname{Dir}(50\mathbf{1})$.
Prior-sensitivity settings change only the stated fitted-prior parameter, and label-noise settings independently flip each simulated choice with the stated probability.

\subsection{Point Estimates and Coefficient Stability}\label{app:point-estimates}
\paragraph{Stability under limited feedback.}
We measure coefficient variability by the mean pairwise $\ell_1$ distance between fits to three feedback subsets, averaged over personas.
Tables~\ref{tab:stability-differences-1}--\ref{tab:stability-differences-3} report paired differences from \bpm.
At $n=25$ and $100$, posterior means vary less than MLE in all three tasks.
Comparisons with regularized estimators depend on the task and budget: RM has lower variability on Summarization at $n=100$ and Story Generation at $n=100,500$, while MAP has lower variability on Story Generation at $n=500$.
Adapted SS differences on Image Captioning remain unresolved beyond $n=10$.
These results support reduced sensitivity relative to MLE under limited feedback, without establishing uniformly minimal coefficient variation.

\paragraph{Coefficient stability and output preference.}
MLE and adapted SS often yield more preferred outputs than \bpm, whereas \bpm outperforms RM at both main-table budgets.
The posterior mean therefore provides a stable coefficient estimate in the stated settings alongside posterior uncertainty, rather than uniformly maximizing output preference.
Coefficient stability does not establish stability of generated behavior: a partial-score diagnostic found higher coefficient variation but slightly lower output-score variation for shrinkage on Summarization at $n=100$ (Supplementary Material, Section S1).

\paragraph{A shrinkage control.}
We interpolate between MLE and uniform weights,
$\widehat{\mathbf w}_{\alpha}=(1-\alpha)\widehat{\mathbf w}_{\mathrm{MLE}}+\alpha\mathbf1/L$.
Table~\ref{tab:def-shrinkage} shows task-dependent trade-offs: shrinkage yields more preferred outputs but less stable coefficients on Summarization, while \bpm has higher output preference on Image Captioning and Story Generation at $n=100$.
At $n=25$ on the latter two tasks, shrinkage selects uniform weights.
This control does not consistently reproduce the posterior mean's combination of coefficient stability and output preference.

\paragraph{Estimation and statistical details.}
The posterior mean, MAP, and MLE share the preference likelihood; MLE omits the weight and temperature priors.
Because the chosen Dirichlet density is unbounded toward the simplex boundary, MAP is defined as the joint mode in centered softmax coordinates and $\log\tau$, including the transformation Jacobian, and optimized using L-BFGS-B with uniform initialization and eight seeded restarts.
RM variability uses its softmax-transformed merge coefficients; adapted SS uses signed coefficients with $\ell_1$ norm at most one.
Feedback repetitions are separately sampled and nested across budgets within each repetition.
Variability intervals use 2,000 paired persona-bootstrap resamples, holding the three feedback subsets fixed.
The extended comparison is post hoc.
Shrinkage selects $\alpha\in\{0,0.05,\ldots,1\}$ by persona-averaged held-out negative log-likelihood, averaging over feedback draws and refitting temperature on each candidate's training rows.
Each coefficient fit uses $n$ comparisons; selection additionally uses 100 held-out elicitation comparisons per split and delivers one $\alpha$ per task and budget.

\begin{table}[t]
\centering\small
\caption{\textbf{Shrinkage toward uniform compared with the BPM posterior mean.}
Win rates are shrinkage against BPM, with 95\% input-bootstrap intervals.
Variability is the persona-averaged pairwise coefficient $\ell_1$ distance
across three feedback subsets. Variability entries are point estimates;
they do not imply equivalent stability when close.
Shrinkage selection additionally uses held-out elicitation comparisons.}
\label{tab:def-shrinkage}
\begin{tabular}{lrcccc}
\toprule
Task & $n$ & $\alpha^*$ & Shrinkage win rate (\%) & Shrinkage variability & BPM variability \\
\midrule
Summarization & 25 & 0.70 & 77.6 [75.6, 79.4] & 0.299 & 0.112 \\
Summarization & 100 & 0.20 & 69.8 [67.8, 71.7] & 0.646 & 0.465 \\
Image Captioning & 100 & 0.25 & 41.9 [40.0, 43.8] & 0.733 & 0.313 \\
Story Generation & 100 & 0.60 & 44.5 [43.3, 45.7] & 0.328 & 0.392 \\
\bottomrule
\end{tabular}
\end{table}

\begin{table}[t]\centering\small
\caption{\textbf{Coefficient variability on Summarization.} Variability intervals use 2,000 persona-bootstrap resamples. Difference intervals are paired (method minus BPM), holding the three feedback subsets fixed. Intervals containing zero do not demonstrate equal stability. SS coefficients are signed.}
\label{tab:stability-differences-1}
\begin{tabular}{llcc}\toprule
$n$ & Method & Variability [95\% CI] & Difference from BPM [95\% CI] \\ \midrule
25 & BPM & 0.112 [0.080, 0.142] & -- \\
25 & MAP & 0.231 [0.098, 0.377] & +0.119 [+0.014, +0.240] \\
25 & MLE & 0.998 [0.494, 1.482] & +0.886 [+0.412, +1.346] \\
25 & Shrinkage & 0.299 [0.148, 0.445] & +0.187 [+0.067, +0.307] \\
25 & RM & 0.331 [0.297, 0.367] & +0.219 [+0.173, +0.263] \\
25 & SS & 1.498 [1.216, 1.705] & +1.386 [+1.092, +1.587] \\
25 & ES & 1.556 [0.889, 2.000] & +1.443 [+0.807, +1.892] \\
25 & Uniform & 0.000 [0.000, 0.000] & -0.112 [-0.142, -0.080] \\
100 & BPM & 0.465 [0.364, 0.561] & -- \\
100 & MAP & 0.568 [0.475, 0.657] & +0.103 [-0.045, +0.265] \\
100 & MLE & 0.807 [0.671, 1.000] & +0.342 [+0.245, +0.457] \\
100 & Shrinkage & 0.646 [0.537, 0.800] & +0.180 [+0.105, +0.261] \\
100 & RM & 0.353 [0.345, 0.361] & -0.113 [-0.204, -0.020] \\
100 & SS & 0.935 [0.719, 1.107] & +0.470 [+0.243, +0.698] \\
100 & ES & 1.222 [0.667, 1.667] & +0.757 [+0.194, +1.248] \\
100 & Uniform & 0.000 [0.000, 0.000] & -0.465 [-0.561, -0.364] \\
500 & BPM & 0.238 [0.186, 0.298] & -- \\
500 & MAP & 0.234 [0.191, 0.287] & -0.004 [-0.052, +0.049] \\
500 & MLE & 0.277 [0.206, 0.355] & +0.039 [-0.042, +0.131] \\
500 & Shrinkage & 0.277 [0.206, 0.355] & +0.039 [-0.042, +0.131] \\
500 & RM & 0.206 [0.151, 0.272] & -0.032 [-0.136, +0.084] \\
500 & SS & 0.205 [0.158, 0.238] & -0.033 [-0.105, +0.037] \\
500 & ES & 0.222 [0.000, 0.667] & -0.016 [-0.296, +0.462] \\
500 & Uniform & 0.000 [0.000, 0.000] & -0.238 [-0.298, -0.186] \\
\bottomrule\end{tabular}\end{table}

\begin{table}[t]\centering\small
\caption{\textbf{Coefficient variability on Image Captioning.} Variability intervals use 2,000 persona-bootstrap resamples. Difference intervals are paired (method minus BPM), holding the three feedback subsets fixed. Intervals containing zero do not demonstrate equal stability. SS coefficients are signed.}
\label{tab:stability-differences-2}
\begin{tabular}{llcc}\toprule
$n$ & Method & Variability [95\% CI] & Difference from BPM [95\% CI] \\ \midrule
25 & BPM & 0.358 [0.214, 0.495] & -- \\
25 & MAP & 0.366 [0.205, 0.540] & +0.007 [-0.209, +0.245] \\
25 & MLE & 1.592 [1.433, 1.747] & +1.234 [+0.938, +1.511] \\
25 & Shrinkage & 0.000 [0.000, 0.000] & -0.358 [-0.495, -0.214] \\
25 & RM & 0.323 [0.261, 0.389] & -0.036 [-0.212, +0.157] \\
25 & SS & 0.742 [0.174, 1.375] & +0.383 [-0.277, +1.138] \\
25 & ES & 0.889 [0.222, 1.667] & +0.531 [-0.260, +1.419] \\
25 & Uniform & 0.000 [0.000, 0.000] & -0.358 [-0.495, -0.214] \\
100 & BPM & 0.313 [0.203, 0.463] & -- \\
100 & MAP & 0.514 [0.335, 0.721] & +0.200 [+0.117, +0.312] \\
100 & MLE & 0.978 [0.594, 1.320] & +0.664 [+0.340, +0.915] \\
100 & Shrinkage & 0.733 [0.446, 0.990] & +0.420 [+0.179, +0.597] \\
100 & RM & 0.366 [0.324, 0.417] & +0.053 [-0.049, +0.143] \\
100 & SS & 0.563 [0.076, 1.055] & +0.250 [-0.171, +0.748] \\
100 & ES & 0.556 [0.000, 1.222] & +0.242 [-0.352, +0.963] \\
100 & Uniform & 0.000 [0.000, 0.000] & -0.313 [-0.463, -0.203] \\
500 & BPM & 0.219 [0.145, 0.290] & -- \\
500 & MAP & 0.241 [0.179, 0.315] & +0.022 [-0.023, +0.062] \\
500 & MLE & 0.332 [0.240, 0.423] & +0.113 [+0.009, +0.225] \\
500 & RM & 0.199 [0.160, 0.251] & -0.020 [-0.122, +0.095] \\
500 & SS & 0.155 [0.035, 0.320] & -0.065 [-0.170, +0.067] \\
500 & ES & 0.444 [0.000, 0.889] & +0.225 [-0.220, +0.697] \\
500 & Uniform & 0.000 [0.000, 0.000] & -0.219 [-0.290, -0.145] \\
\bottomrule\end{tabular}\end{table}

\begin{table}[t]\centering\small
\caption{\textbf{Coefficient variability on Story Generation.} Variability intervals use 2,000 persona-bootstrap resamples. Difference intervals are paired (method minus BPM), holding the three feedback subsets fixed. Intervals containing zero do not demonstrate equal stability. SS coefficients are signed.}
\label{tab:stability-differences-3}
\begin{tabular}{llcc}\toprule
$n$ & Method & Variability [95\% CI] & Difference from BPM [95\% CI] \\ \midrule
25 & BPM & 0.088 [0.059, 0.116] & -- \\
25 & MAP & 0.119 [0.056, 0.190] & +0.032 [-0.010, +0.079] \\
25 & MLE & 1.398 [1.249, 1.554] & +1.310 [+1.152, +1.482] \\
25 & Shrinkage & 0.000 [0.000, 0.000] & -0.088 [-0.116, -0.059] \\
25 & RM & 0.439 [0.389, 0.493] & +0.351 [+0.297, +0.414] \\
25 & SS & 1.306 [1.113, 1.501] & +1.219 [+1.038, +1.405] \\
25 & Uniform & 0.000 [0.000, 0.000] & -0.088 [-0.116, -0.059] \\
100 & BPM & 0.392 [0.333, 0.447] & -- \\
100 & MAP & 0.413 [0.357, 0.469] & +0.021 [+0.014, +0.029] \\
100 & MLE & 0.819 [0.720, 0.915] & +0.428 [+0.333, +0.525] \\
100 & Shrinkage & 0.328 [0.288, 0.366] & -0.064 [-0.109, +0.001] \\
100 & RM & 0.260 [0.221, 0.298] & -0.132 [-0.163, -0.098] \\
100 & SS & 0.892 [0.809, 0.963] & +0.500 [+0.421, +0.574] \\
100 & Uniform & 0.000 [0.000, 0.000] & -0.392 [-0.447, -0.333] \\
500 & BPM & 0.130 [0.097, 0.160] & -- \\
500 & MAP & 0.122 [0.091, 0.148] & -0.008 [-0.014, -0.003] \\
500 & MLE & 0.166 [0.127, 0.199] & +0.036 [+0.022, +0.053] \\
500 & RM & 0.036 [0.026, 0.045] & -0.094 [-0.119, -0.069] \\
500 & SS & 0.175 [0.114, 0.228] & +0.045 [+0.014, +0.077] \\
500 & Uniform & 0.000 [0.000, 0.000] & -0.130 [-0.160, -0.097] \\
\bottomrule\end{tabular}\end{table}

\paragraph{Adapted SS and merge-operator controls.}
\label{app:ss}

We compare \bpm with an adapted Spectral Souping (SS) coefficient estimator~\citep{chow2026spectral} to examine how their inferred coefficients affect generated outputs.
SS uses the same fixed offline feedback, with standardized reward-score differences replacing its policy-derived features.
It fits signed coefficients $\boldsymbol\lambda\in\mathbb R^L$ using an unregularized Bradley--Terry objective subject to $\|\boldsymbol\lambda\|_1\le\beta'/\beta$.
We set $\beta'=\beta$ without tuning.
This experiment evaluates the adapted coefficient estimator; Appendix~\ref{app:coverage} separately examines coefficient intervals, including those from SS's sequential Gaussian inference.

\paragraph{Signed coefficients require a different merge operator.}
\bpm's factor-averaged LoRA merge weights both adapter factors, so assigning $-1$ to a single expert produces $+\Delta\boldsymbol\theta_k$ rather than subtracting that expert (Appendix~\ref{app:merge}).
We therefore implement signed SS coefficients with exact delta merging,
$\boldsymbol\theta_0+\sum_k\lambda_k\Delta\boldsymbol\theta_k$.
We report the output comparison here, together with a control applying this same operator to \bpm's coefficients.

\paragraph{The adapted SS models often receive higher preference.}
Table~\ref{tab:ss} shows that the SS models are preferred in $14$ of $15$ task--budget comparisons; the remaining interval, for \textsc{Summarization} at $n=500$, includes $50\%$.
Applying exact delta merging to both estimators preserves the qualitative conclusions at $n=100$ and $500$ (Table~\ref{tab:ss-operator}).
Changing the merge operator alone therefore does not close the performance gap.
\begin{table}[htbp]
\centering\small
\setlength{\tabcolsep}{4pt}
\caption{\textbf{Comparison with the adapted Spectral Souping estimator.}
\bpm's macro decided win rate (\%), averaged equally over personas and over three feedback subsets, with $95\%$ paired input-bootstrap intervals.
Values below $50\%$ favor the adapted SS models.
\bpm uses the factor-averaged merge from the main experiments; SS uses the exact delta merge described above.}
\label{tab:ss}
\begin{tabular}{lccccc}
\toprule
Task & $n{=}10$ & $n{=}25$ & $n{=}50$ & $n{=}100$ & $n{=}500$ \\
\midrule
\textsc{Summarization} & $32.1$ & $17.5$ & $22.9$ & $24.3$ & $50.2$ \\
 & {\scriptsize$[30.7, 33.5]$} & {\scriptsize$[16.0, 19.0]$} & {\scriptsize$[20.4, 25.1]$} & {\scriptsize$[21.7, 26.9]$} & {\scriptsize$[44.6, 55.9]$} \\
\textsc{Image Captioning} & $15.8$ & $15.9$ & $14.3$ & $15.6$ & $19.2$ \\
 & {\scriptsize$[14.7, 16.9]$} & {\scriptsize$[14.7, 17.1]$} & {\scriptsize$[13.1, 15.5]$} & {\scriptsize$[14.5, 16.8]$} & {\scriptsize$[17.7, 20.7]$} \\
\textsc{Story Generation} & $38.6$ & $34.1$ & $33.4$ & $33.2$ & $33.7$ \\
 & {\scriptsize$[37.2, 39.9]$} & {\scriptsize$[32.9, 35.4]$} & {\scriptsize$[32.2, 34.6]$} & {\scriptsize$[32.0, 34.5]$} & {\scriptsize$[32.3, 35.0]$} \\
\bottomrule
\end{tabular}
\end{table}

\begin{table}[htbp]
\centering\small
\setlength{\tabcolsep}{4pt}
\caption{\textbf{Controlling the merge operator.}
Macro decided win rates (\%) with $95\%$ paired input-bootstrap intervals, averaged over three feedback subsets.
Each column reports direct preference for the first model over the second.
Exact denotes a linear combination of adapter deltas; factor denotes \bpm's main factor-averaged merge.}
\label{tab:ss-operator}
\begin{tabular}{lccc}
\toprule
& \multicolumn{2}{c}{BPM exact vs.\ SS exact} & BPM exact vs.\ BPM factor \\
\cmidrule(lr){2-3}\cmidrule(lr){4-4}
Task & $n=100$ & $n=500$ & $n=500$ \\
\midrule
Summarization & $27.1\ [24.4,29.8]$ & $45.5\ [39.4,52.4]$ & $49.8\ [41.6,58.6]$ \\
Image Captioning & $16.9\ [15.6,18.2]$ & $20.7\ [19.1,22.1]$ & $57.5\ [55.2,59.8]$ \\
Story Generation & $34.7\ [33.3,36.0]$ & $36.4\ [35.1,37.7]$ & $53.0\ [51.7,54.1]$ \\
\bottomrule
\end{tabular}
\end{table}

\paragraph{Higher preference does not imply more stable coefficients.}
We measure sensitivity to feedback using the mean pairwise $\ell_1$ distance between coefficients fitted to three subsets, averaged equally over personas.
\bpm is more stable at $n=10$ in all three tasks (Table~\ref{tab:ss-stability}).
Paired intervals support lower variation through $n=250$ on \textsc{Summarization} and at every tested budget on \textsc{Story Generation}; on \textsc{Image Captioning}, the difference is inconclusive beyond $n=10$.
These results support \bpm's stability under limited feedback without implying uniformly better output preference.

\begin{table}[htbp]
\centering\small
\setlength{\tabcolsep}{4pt}
\caption{\textbf{Coefficient variation across feedback subsets.}
Persona-averaged pairwise $\ell_1$ distance; lower values indicate greater stability.
Within each task, bold marks the lower value only when the $95\%$ paired persona-bootstrap interval for SS minus BPM excludes zero.
BPM uses simplex weights; SS uses signed coefficients with $\ell_1$ norm at most one.}
\label{tab:ss-stability}
\begin{tabular}{lcccccc}
\toprule
& \multicolumn{2}{c}{Summarization} & \multicolumn{2}{c}{Image Captioning} & \multicolumn{2}{c}{Story Generation} \\
\cmidrule(lr){2-3}\cmidrule(lr){4-5}\cmidrule(lr){6-7}
$n$ & BPM & SS & BPM & SS & BPM & SS \\
\midrule
10 & $\mathbf{0.071}$ & $1.827$ & $\mathbf{0.102}$ & $1.014$ & $\mathbf{0.035}$ & $1.661$ \\
25 & $\mathbf{0.112}$ & $1.498$ & $0.358$ & $0.742$ & $\mathbf{0.088}$ & $1.306$ \\
50 & $\mathbf{0.304}$ & $1.383$ & $0.257$ & $0.660$ & $\mathbf{0.278}$ & $1.132$ \\
100 & $\mathbf{0.465}$ & $0.935$ & $0.313$ & $0.563$ & $\mathbf{0.392}$ & $0.892$ \\
250 & $\mathbf{0.398}$ & $0.462$ & $0.259$ & $0.277$ & $\mathbf{0.295}$ & $0.425$ \\
500 & $0.238$ & $0.205$ & $0.219$ & $0.155$ & $\mathbf{0.130}$ & $0.175$ \\
\bottomrule
\end{tabular}
\end{table}

\subsection{Direct Weighting}\label{app:dilution}
Given inferred reward weights, \bpm uses them directly as coefficients for merging the shared experts.
An alternative is to use the same weights to score candidate models and select the one with the highest inferred utility.
We examine whether this alternative improves the fitted reward objective and whether it produces outputs that better match persona preferences.

\paragraph{Does direct merging maximize inferred utility?}
For each fitted $\widehat{\mathbf w}$ on \textsc{Image Captioning}, we consider the $21$ experts, the uniform merge, and twenty random merges.
We score their outputs using the inferred utility
$\widehat{\mathbf w}^{\top}\tilde{\mathbf r}$, with the same reward standardization used for that fit.
We select the candidate with the highest mean utility on one half of the evaluation inputs, then compare its utility with that of the \bpm merge on the other half.
This diagnostic measures the inferred reward objective, not preference judgments from the persona.

Across seven personas (including Annotator), six budgets
($10$, $25$, $50$, $100$, $250$, and $500$), and three feedback draws, selection always returns an individual expert.
The selected candidate has higher estimated utility than the \bpm merge in $125$ of $126$ cells.
Thus, directly using inferred reward weights as merge coefficients does not generally maximize the fitted utility over this candidate set.
These utility differences alone do not establish which model's outputs the persona prefers.

\paragraph{Does utility-based selection improve personalization?}\label{app:twostep}
We next compare three ways to use the expert library:
\bpm merges experts using the inferred weights;
\emph{selection} chooses the candidate with the highest mean inferred utility;
and \emph{uniform-weight selection} applies the same selection rule using equal reward weights $\mathbf1/L$.
The last method serves as a control for whether inferred preferences improve candidate selection.

For this experiment, candidates are selected using only the elicitation inputs, without access to evaluation inputs or persona-judge scores.
We then evaluate all three methods against the uniform merge on $451$ \textsc{Summarization} shared test inputs.

Table~\ref{tab:twostep} reports differences between these win rates.
\begin{itemize}
    \item The \emph{BPM $-$ Selection} column compares directly merging experts with selecting a candidate using the same inferred weights.
    \item The \emph{Selection $-$ Uniform-weight selection} column measures the benefit of using inferred rather than equal reward weights for selection.
    \item The \emph{BPM $-$ Uniform-weight selection} column compares the full \bpm procedure with candidate selection that uses no persona-specific feedback.
\end{itemize}

All entries are percentage-point differences; positive values favor the first method named in each column.
They are not direct head-to-head win rates between the two methods named in a column.

On Summarization, direct merging achieves higher win rates
against uniform merging than utility-based selection.
The preceding Captioning diagnostic instead measures fitted
utility under a different selection protocol.
These results support evaluating how inferred weights are used,
but do not establish a utility--preference trade-off
on the same models.

\begin{table}[ht]
\centering\footnotesize
\caption{\textbf{Direct merging and candidate selection on Summarization.} Differences in macro decided win rate against the uniform merge, in percentage points, with $95\%$ paired input-bootstrap intervals. Selection uses inferred reward weights; uniform-weight selection uses equal reward weights. These are differences against a common reference, not direct head-to-head win rates.}
\label{tab:twostep}
\begin{tabular}{lccc}
\toprule
$n$ & \shortstack{BPM $-$\\Selection} & \shortstack{Selection $-$\\Uniform-weight selection} & \shortstack{BPM $-$\\Uniform-weight selection}\\
\midrule
10 & $+39.3\,[34.5,43.8]$ & $+0.5\,[-0.0,1.0]$ & $+39.8\,[35.1,44.3]$\\
25 & $+47.5\,[44.6,50.4]$ & $+0.5\,[-0.2,1.2]$ & $+48.0\,[45.0,51.0]$\\
50 & $+53.6\,[51.4,55.8]$ & $+1.7\,[0.2,3.3]$ & $+55.3\,[53.1,57.5]$\\
100 & $+53.5\,[51.3,55.6]$ & $+2.9\,[1.0,4.9]$ & $+56.4\,[54.4,58.3]$\\
250 & $+45.9\,[43.5,48.3]$ & $+8.0\,[5.9,10.1]$ & $+53.9\,[51.5,56.1]$\\
500 & $+21.8\,[19.9,23.7]$ & $+31.3\,[29.0,33.4]$ & $+53.1\,[50.7,55.4]$\\
\bottomrule
\end{tabular}
\end{table}

\section{Evaluation Reliability and Scoring Sensitivity}\label{app:further}

\subsection{Evaluation Accounting and Scoring Sensitivity}
\label{app:evaluation-accounting}

\paragraph{Personalization gains under supplementary scoring.}
To assess how conditioning on decided judgments affects the reported personalization gains, we supplement the main win rate with a score that retains valid ties and order-inconsistent comparisons.
The central assignment gives these undecided pairs a score of $0.5$; bounds account for the unavailable presentation-order decisions.
Under this assignment, \bpm retains its advantage over uniform merging in all three tasks at both main-table budgets (\Cref{tab:def-pp}). At $n=500$, the supplementary scores against uniform are $74.0\%$, $73.9\%$, and $66.4\%$ on Summarization, Image Captioning, and Story Generation, respectively. 
These results support the usefulness of feedback-derived coefficients under this supplementary treatment of undecided judgments.

\paragraph{Interpreting comparison magnitudes.}
Across the 42 comparisons in \Cref{tab:def-pp}, the central score and the decided win rate agree in direction relative to $50\%$.
The panel comprises uniform merging, the base policy, L2W, RM, BT best-of-eight, DPO, and ICAI on the merge.
Agreement concerns this panel and this central assignment; it does not imply that all admissible assignments give the same conclusion.
Magnitudes can change substantially: on Summarization against uniform at $n=500$, the decided rate is $88.4\%$, the supplementary score is $74.0\%$, and $34.9\%$ of verdicts are order-inconsistent.
Against DPO in the same setting, the supplementary score is $51.0\%$ with interval $[49.8,52.1]$.
The supplementary analysis therefore preserves evidence of useful personalization against uniform merging while clarifying that the headline win rates describe preference conditional on a decided judgment.

\begin{table}[t]
\centering\scriptsize
\caption{\textbf{Scoring sensitivity for seven compared methods.} Dec.: decided win rate. PP: supplementary score with ties and order-inconsistent pairs assigned 0.5; brackets give 95\% input-bootstrap intervals. PP bounds assign 0.25 or 0.75 to order-inconsistent pairs and are not confidence intervals. OI: percentage of order-inconsistent verdicts. Entries use persona-equal aggregation over the available feedback draws. This specified panel contains 42 task--method--budget comparisons.}
\label{tab:def-pp}
\setlength{\tabcolsep}{3pt}
\begin{tabular}{llrccrrccr}
\toprule
 & & \multicolumn{4}{c}{$n=100$} & \multicolumn{4}{c}{$n=500$} \\
\cmidrule(lr){3-6}\cmidrule(lr){7-10}
task & compared method & Dec. & PP [95\% CI] & PP bounds & OI & Dec. & PP [95\% CI] & PP bounds & OI \\
\midrule
\textsc{Summarization} & RS (uniform merge) & 91.7 & 76.5 [75.4, 77.7] & 71.3--81.8 & 21.1 & 88.4 & 74.0 [72.7, 75.3] & 65.3--82.7 & 34.9 \\
 & Base policy & 76.8 & 69.3 [67.8, 70.7] & 62.7--75.9 & 26.4 & 76.7 & 68.5 [67.2, 69.9] & 59.3--77.7 & 36.8 \\
 & L2W & 59.6 & 63.1 [62.2, 63.9] & 59.5--66.6 & 14.0 & 81.9 & 72.5 [71.5, 73.6] & 68.4--76.7 & 16.6 \\
 & Ridge-Merge & 74.9 & 62.8 [61.8, 63.8] & 57.1--68.4 & 22.6 & 84.1 & 67.9 [66.6, 69.2] & 58.9--76.9 & 35.9 \\
 & BT + best-of-8 & 82.5 & 70.0 [68.8, 71.2] & 64.2--75.8 & 23.0 & 85.1 & 69.6 [68.2, 70.9] & 60.4--78.7 & 36.5 \\
 & DPO & 37.9 & 42.4 [41.4, 43.3] & 34.0--50.8 & 33.7 & 57.6 & 51.0 [49.8, 52.1] & 40.7--61.3 & 41.2 \\
 & ICAI-M & 74.4 & 65.0 [64.0, 66.0] & 59.5--70.5 & 22.0 & 88.6 & 72.7 [71.5, 73.9] & 65.0--80.3 & 30.5 \\
\midrule
\textsc{Image Captioning} & RS (uniform merge) & 77.1 & 71.9 [71.0, 72.8] & 68.0--75.8 & 15.5 & 78.7 & 73.9 [73.0, 74.9] & 69.6--78.2 & 17.2 \\
 & Base policy & 74.6 & 71.8 [70.7, 72.9] & 67.9--75.8 & 15.9 & 76.4 & 74.0 [72.8, 75.0] & 69.7--78.2 & 16.9 \\
 & L2W & 72.0 & 67.2 [66.1, 68.2] & 62.1--72.3 & 20.4 & 76.2 & 70.8 [69.9, 71.7] & 65.4--76.1 & 21.4 \\
 & Ridge-Merge & 72.1 & 68.8 [67.9, 69.8] & 64.5--73.2 & 17.5 & 75.3 & 70.9 [69.9, 71.9] & 66.1--75.7 & 19.2 \\
 & BT + best-of-8 & 67.6 & 64.6 [63.2, 65.9] & 57.6--71.5 & 27.8 & 69.4 & 67.5 [66.3, 68.6] & 61.0--73.9 & 25.8 \\
 & DPO & 7.9 & 16.4 [15.5, 17.4] & 11.4--21.5 & 20.2 & 16.0 & 23.5 [22.4, 24.5] & 17.0--29.9 & 25.8 \\
 & ICAI-M & 48.2 & 47.6 [46.4, 49.0] & 41.3--54.0 & 25.5 & 59.5 & 57.8 [56.7, 58.9] & 52.2--63.3 & 22.2 \\
\midrule
\textsc{Story Generation} & RS (uniform merge) & 64.3 & 58.6 [57.4, 59.7] & 49.9--67.3 & 34.8 & 76.1 & 66.4 [65.1, 67.6] & 58.1--74.6 & 32.9 \\
 & Base policy & 71.3 & 62.7 [61.4, 63.9] & 53.7--71.6 & 35.7 & 81.4 & 70.3 [69.1, 71.5] & 62.3--78.2 & 31.8 \\
 & L2W & 58.9 & 55.0 [54.0, 55.9] & 46.0--63.9 & 35.9 & 72.4 & 63.7 [62.7, 64.7] & 55.2--72.1 & 33.8 \\
 & Ridge-Merge & 57.3 & 54.3 [53.6, 55.1] & 45.3--63.3 & 36.0 & 71.5 & 63.3 [62.4, 64.2] & 54.8--71.7 & 33.9 \\
 & BT + best-of-8 & 56.8 & 54.0 [53.0, 55.0] & 44.3--63.7 & 39.0 & 67.1 & 60.2 [59.1, 61.3] & 50.9--69.5 & 37.3 \\
 & DPO & 36.1 & 41.6 [40.3, 42.9] & 33.2--50.0 & 33.6 & 49.0 & 49.6 [48.2, 50.9] & 40.5--58.6 & 36.0 \\
 & ICAI-M & 44.0 & 45.5 [44.7, 46.3] & 36.0--54.9 & 37.8 & 52.6 & 51.9 [51.0, 52.7] & 42.6--61.1 & 37.0 \\
\bottomrule
\end{tabular}
\end{table}

\paragraph{Scoring and aggregation details.}
We classify verdicts as wins $W$, losses $L$, ties $T$, order-inconsistent comparisons $O$, or invalid judgments.
The main metric is $W/(W+L)$. 
Scoring each presentation order as $1$ for a win, $0.5$ for a tie, and $0$ for a loss would assign an order-inconsistent pair $0.25$, $0.5$, or $0.75$, depending on its two decisions.
Those decisions were not separately retained in the verdict records.
With $N=W+L+T+O$, we therefore report
\[
 S_c=\frac{W+0.5T+cO}{N},
 \qquad c\in\{0.25,0.5,0.75\},
\]
where $S_{0.5}$ is the central score and the other assignments bound the possible order-inconsistent contributions.
This score is a supplementary assignment, not an exact reconstruction of the missing order-level decisions.
Invalid judgments are excluded and counted separately; their reported rate is zero throughout the seven-method panel.
Scores are computed per persona and feedback draw, then averaged over draws and equally over personas.
Confidence intervals use 2,000 shared input-bootstrap resamples, holding feedback draws fixed. Assignment bounds and sampling intervals describe different sources of uncertainty.

\subsection{Judge Reliability and Replication}
\label{app:rater}

We examine whether reported preferences persist across repeated judgments, judge families, and paraphrased persona descriptions.
Win rates use decided pairs only: ties and judgments whose preferred response changes with presentation order are excluded.
For the comparison of the base policy with the uniform merge, order-dependent judgments range from $14.4$ to $28.8\%$ across personas in \textsc{Summarization} and $17.0$--$30.0\%$ in \textsc{Image Captioning}.
Pooling over compared methods, budgets, and feedback subsets, order-dependent judgments account for $16.5$ to $36.3\%$ of evaluated inputs.
Ties account for a further $14.1$--$34.5\%$ in \textsc{Summarization} and $4.2$--$31.8\%$ in \textsc{Image Captioning}.
In \textsc{Story Generation}, tie rates are below $5\%$ for five of the seven personas and reach $20.4\%$ for \textit{Librarian}.
These checks assess robustness within simulated evaluation; generalization to a broader population of human users remains untested.

\paragraph{Repeated judgments are consistent.}
We re-evaluate the same $451$ \textsc{Summarization} pairs approximately thirteen hours apart and $200$ fixed \textsc{Image Captioning} pairs per persona in shuffled batch order.
Table~\ref{tab:judge-repeatability} shows high agreement and small changes in win rates across runs.
Repeated evaluation thus largely preserves decided preferences; most variation concerns whether a pair receives a decided judgment.

\begin{table}[htbp]
\centering\small
\setlength{\tabcolsep}{5pt}
\caption{\textbf{Repeatability of persona judgments.}
Cohen's $\kappa$ uses four outcomes: A preferred, B preferred, tie, and order-dependent.
Ranges span personas; decided agreement includes only pairs decided in both runs. Win-rate changes are in percentage points (\%).}
\label{tab:judge-repeatability}
\begin{tabular}{lccc}
\toprule
Task & Cohen's $\kappa$ & Decided agreement & Max.\ win-rate change \\
\midrule
Summarization & $0.90$--$0.97$ & $99.6$--$100\%$ & $1.5$ \\
Image Captioning & $0.92$--$0.99$ & $100\%$ & $2.5$ \\
\bottomrule
\end{tabular}
\end{table}

\paragraph{Preference reversals persist across judge families.}
We hold the personalized models fixed and repeat evaluation with \texttt{gpt-5-mini}, retaining the persona descriptions and two-order protocol.
\textsc{Summarization} is an exception due to data protocol.
This check uses one feedback subset at $n=500$ and covers all six reversal pairs and comparisons with uniform merge, Ridge-Merge, and DPO.
The exception is Story Generation against DPO, where the two estimates lie near 50\%.
The larger change against DPO on Summarization also indicates judge-dependent comparison magnitudes.
Separately, \Cref{fig:analysis}e reports Story Generation results with \texttt{gpt-oss-20b} at $n=100$ and $500$.

\begin{table}[htbp]
\centering\small
\setlength{\tabcolsep}{5pt}
\caption{\textbf{Evaluation with another judge family.}
\bpm's macro decided win rates (\%) with $95\%$CI, using the same generated outputs and one feedback subset at $n=500$.
These estimates are distinct from the three-subset averages in \Cref{tab:main}.}
\label{tab:judge-family}
\begin{tabular}{llcc}
\toprule
Task & Compared method & Original judge & \texttt{gpt-5-mini} \\
\midrule
Summarization & Uniform merge & $90.4 [88.4, 92.3]$ & $92.8 [91.8, 93.9]$ \\
 & Ridge-Merge & $86.1 [84.0, 88.1]$ & $89.3 [87.8, 90.6]$ \\
 & DPO & $55.4 [51.8, 58.8]$ & $83.3 [81.0, 85.6]$ \\
\midrule
Image Captioning & Uniform merge & $79.8 [77.4, 82.1]$ & $74.7 [72.3, 77.1]$ \\
 & Ridge-Merge & $76.3 [74.0, 78.5]$ & $73.6 [71.4, 75.8]$ \\
 & DPO & $15.3 [13.8, 16.9]$ & $15.7 [14.1, 17.4]$ \\
\bottomrule
\end{tabular}
\end{table}

\paragraph{Simulated preferences are sensitive to persona wording.}
With personalized models held fixed, five of six preference
reversals persist after paraphrasing the judge's persona descriptions.
For Ward--ED, paraphrasing the Ward description increases
order-inconsistent judgments from 49.4\% to 73.4\%;
the Ward-personalized model wins 34.5\% of 119 decided comparisons.
Thus, repeatability under fixed prompts does not ensure
robustness to persona wording.

\end{document}